\newtheorem{lemma}{Lemma}
\newtheorem{theorem}{Theorem}
\newtheorem{assumption}{Assumption}
\newcommand{\cO}{{\mathcal{O}}}
\newcommand{\cH}{{\mathcal{H}}}
\newcommand{\cS}{{\mathcal{S}}}
\newcommand{\rR}{{\mathbb{R}}}
\newcommand{\E}{{\mathbb{E}}}
\newcommand{\Pb}{{\mathbb{P}}}
\newcommand{\md}{\mathrm{d}}
\newcommand{\od}{o_{\mathrm{delim}}}
\newcommand{\bbT}{\mathbb{T}}
\newcommand{\bbM}{\mathbb{M}}
\newcommand{\te}{\mathrm{test}}
\newcommand{\pe}{s}
\newcommand{\attn}{\mathcal{A}ttn}
\newcommand{\softmax}{\mathrm{Softmax}}
\newcommand{\relu}{\mathrm{ReLU}}
\newcommand{\vecc}{\mathcal{V}ec}
\renewcommand\bm[1]{{#1}}
\title{Transformers as Multi-task Learners:\\ Decoupling Features in Hidden Markov Models}
\author{%
  Yifan Hao\thanks{Random order, equal contribution.} \quad \quad Chenlu Ye$^*$ \quad \quad  Chi Han \quad \quad Tong Zhang \\
  University of Illinois Urbana-Champaign \\
  \texttt{\{yifanh12, chenluy3, chihan3, tozhang\}@illinois.edu} \\
  % examples of more authors
  % \And
  % Coauthor \\
  % Affiliation \\
  % Address \\
  % \texttt{email} \\
  % \AND
  % Coauthor \\
  % Affiliation \\
  % Address \\
  % \texttt{email} \\
  % \And
  % Coauthor \\
  % Affiliation \\
  % Address \\
  % \texttt{email} \\
  % \And
  % Coauthor \\
  % Affiliation \\
  % Address \\
  % \texttt{email} \\
}
\begin{document}
\maketitle

\begin{abstract}
Transformer-based models have shown remarkable capabilities in sequence learning across a wide range of tasks, often performing well on specific task by leveraging input-output examples. Despite their empirical success, a comprehensive theoretical understanding of this phenomenon remains limited. In this work, we investigate the layerwise behavior of Transformers to uncover the mechanisms underlying their multi-task generalization ability. Taking explorations on a typical sequence model—Hidden Markov Models (HMMs), which are fundamental to many language tasks, we observe that:
(i) lower layers of Transformers focus on extracting feature representations, primarily influenced by neighboring tokens;
(ii) on the upper layers, features become decoupled, exhibiting a high degree of time disentanglement.
Building on these empirical insights, we provide theoretical analysis for the expressiveness power of Transformers. Our explicit constructions align closely with empirical observations, providing theoretical support for the Transformer’s effectiveness and efficiency on sequence learning across diverse tasks.
\end{abstract}

\section{Introduction}

Transformer-based models have achieved state-of-the-art performance across a broad range of sequence learning tasks, from language modeling and translation \citep{touvron2023llama, dubey2024llama, achiam2023gpt, team2023gemini} to algorithmic reasoning \citep{liu2024deepseek, ye2024physics}. Remarkably, a single Transformer can often generalize across diverse tasks with minimal supervision, leveraging only a few input-output examples—a capability that underpins its success in few-shot and in-context learning \citep{brown2020language, wei2022emergent, dong2022survey, min2022rethinking}.

%Remarkably, a single Transformer can often generalize across diverse tasks with minimal supervision, leveraging only a few input-output examples—a capability that underpins its success in few-shot and in-context learning \textcolor{orange}{[+cite]}.

While the empirical success is well-documented, a theoretical understanding of \emph{why Transformers generalize so effectively across tasks} remains elusive. In particular, the internal mechanisms by which Transformers represent and process sequential information across layers are not yet fully understood. This gap is especially pressing given the growing interest in deploying large-scale Transformers in multi-task and general-purpose settings.

In this work, we aim to bridge this understanding gap by investigating the layerwise behavior of Transformers. We take explorations on Hidden Markov Model (HMMs)\citep{rabiner1989tutorial,baum1967inequality}, a classical class of sequence models where observations depend on unobserved hidden states evolving underlying Markov dynamics.
Through empirical analysis, we uncover that while achieving good performance, Transformer learns feature representations on the lower layers, which are heavily influenced by nearby tokens, as well as developing decoupled features on upper layers, behaving like time disentangled representations (see Section~\ref{sec:empirical} for details). Motivated by these observations, we provide a theoretical analysis of Transformer expressiveness. By constructing explicit Transformer architectures that model HMMs efficiently, we demonstrate how the observed empirical patterns naturally emerge from our constructions. These results offer principled insights into how Transformers capture and generalize sequence information across tasks, shedding light on their success in multi-task and few-shot learning. The main contributions are summarized as follows:

\paragraph{Theoretical insights.} On the theoretical side, considering a large hidden space in practice, we adopt the low-rank structure for latent transitions in HMMs for further analysis, which has received tremendous attention recently for its efficiency in computation and inference \citep{siddiqi2010reduced,chiu2021low}. The theoretical results are stated as follows:
\begin{enumerate}[leftmargin=*]
\item \textbf{Expressiveness.} Under mild observability assumptions, Transformers can approximate low-rank HMMs using a fixed-length memory structure, enabling effective in-context learning.
\item \textbf{Layerwise modeling.} Our constructions mirror empirical observations: lower layers extract local features, which are then transformed into decoupled, task-relevant representations in upper layers.
\item \textbf{Generalization to ambiguous settings.} We extend our results to more challenging scenarios where the hidden state space exceeds the observation space, which are natural assumptions in NLP. And we show that Transformers can still learn expressive representations by composing features from multiple future observations.
\end{enumerate}

\paragraph{Empirical observations.} (See Section~\ref{sec:empirical} for details.)
\begin{enumerate}[leftmargin=*]
\item \textbf{Expressiveness.} Well-trained Transformers achieve high accuracy under in-context learning, with performance improving as more input-output examples are provided or as sequence length increases.
\item \textbf{Layerwise behavior.} Lower layers focus on learning local representations, primarily influenced by neighboring tokens. Upper layers develop decoupled, temporally disentangled representations that are less tied to specific input positions and encode higher-level abstractions.
\end{enumerate}

\subsection{Related works}

\begin{wrapfigure}[19]{t}{0.35\textwidth}
\vspace{-6mm}
    \centering
 \includegraphics[width=\linewidth]{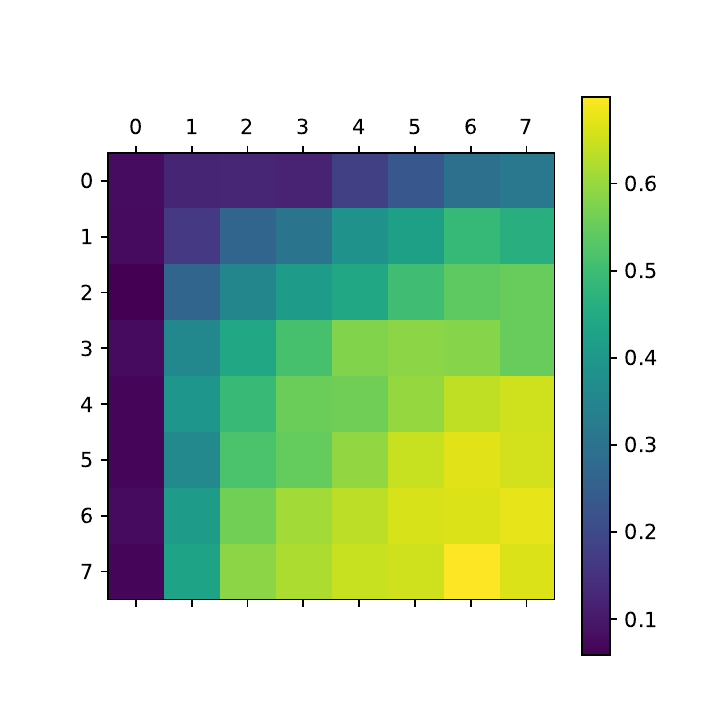}
    \caption{Accuracy of the Transformer under in-context learning setting. The y-axis denotes the number of demonstrative examples in-context, and the x-axis denotes the length of the test input $o_{test}$. All demonstrative examples have a length of 8 in this setting.}
    \label{fig:acc}
    \vspace{-6mm}
\end{wrapfigure}

The expressiveness of Transformers on sequence modeling has been explored from several perspectives. \citet{liu2022transformers} demonstrate that Transformers can emulate automata by learning deterministic transition patterns. \citet{nichani2024transformers} analyze a simplified setting where the data follows a Markov chain governed by a transition matrix. Other works, such as \citet{sander2024transformers} and \citet{wu2025transformers}, study the expressiveness of Transformers in autoregressive modeling, focusing on non-causal prediction tasks. In contrast, our work takes a first step toward understanding the expressive power of Transformers on Hidden Markov Models, 
which are arguably among the simplest yet fundamental tools for modeling natural language tasks. 

\section{Starting from the empirical findings}\label{sec:empirical}

\subsection{Experiment settings}

To empirically investigate how Transformers learn multiple tasks on sequential data, we construct a dataset generated by a mixture of Hidden Markov Models. Each HMM is used to model a tasks-specific distribution, and by mixing them we get a dataset similar to a pre-training corpus to learn language modeling.\footnote{More details are shown in Appendix~\ref{app:experiment}.} 
We sample 131k data, which allows training for 64 epochs, with 64 steps in each epoch on a batch size of 32.
We build a transformer of 16 layers and 16 heads in each layer, and a hidden state dimension of 1024. The transformer adopts the design of Roformer~\cite{su2024roformer} which uses rotary positional encoding technique, which determines the attention logit between two tokens based on their relative position.

\subsection{Results}
%Our experiments uncover several key issues as follows:

\paragraph{Expressiveness power on HMMs.} The high accuracy observed in Figure~\ref{fig:acc} highlights the expressiveness of well-trained Transformers. Moreover, we find that (1) accuracy improves as the number of input-output examples increases, and (2) task outputs become more predictable with longer test sequences.

\begin{wrapfigure}{t}{0.35\textwidth}
\vspace{-6mm}
    \centering
 \includegraphics[width=\linewidth]{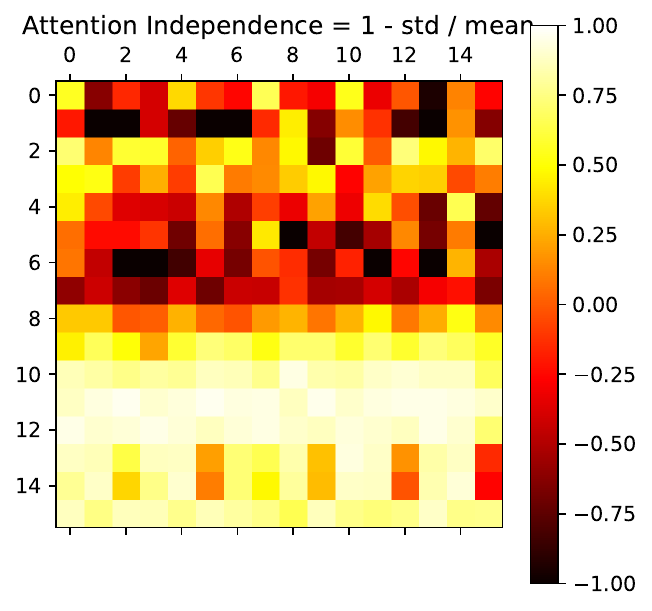}
    \caption{After randomly shuffling the positions of demonstrative inputs, we examine how the logits receive changes over layers (y-axis) and attention heads (x-axis). The measure is $1-\frac{\text{std}(\text{logits})}{\text{mean}(\text{logits})}$.}
    \label{fig:order}
    \vspace{-6mm}
\end{wrapfigure}

\paragraph{Decoupled features on upper layers.} We randomly shuffle the positions of demonstrative inputs and measure how the logit changes. As shown in Figure~\ref{fig:order}, the upper layers (layers 9–15) exhibit attention logits that are less dependent on the positions of input tokens. This suggests that feature representations in these layers become increasingly decoupled, reflecting a high degree of time disentanglement.

\paragraph{Layerwise investigations on Transformer recognitions.} Figure~\ref{fig:task_state} shows that Transformers gradually recognize the task identity across layers. Within a single task, the hidden state is identified earlier than the task itself, indicating that Transformers first learn the relationship between observations and hidden states in the lower layers, and then capture task-level structural information in the upper layers. This reflects a layerwise processing hierarchy in how Transformers handle sequential information.
In Figure~\ref{fig:token}, we observe three key patterns:
(1) The Transformer identifies previous tokens ($i{-}1$, $i{-}2$, $i{-}3$, $i{-}5$, $i{-}10$) with decreasing accuracy as the distance increases, suggesting that feature learning in lower layers relies primarily on nearby tokens.
(2) The accuracy curves for all distances follow a rising-then-falling trend across layers, implying that Transformers initially aggregate information from local contexts, and the resulting features then act as decoupled representations in upper layers.
(3) The peak of each curve shifts to upper layers as the distance to the previous token increases, showing that Transformers first integrate information from close neighbors and then progressively attend to more distant tokens.

\begin{figure}[htbp]
    \centering
    \begin{subfigure}[b]{0.4\textwidth}
        \centering
        \includegraphics[width=\textwidth]{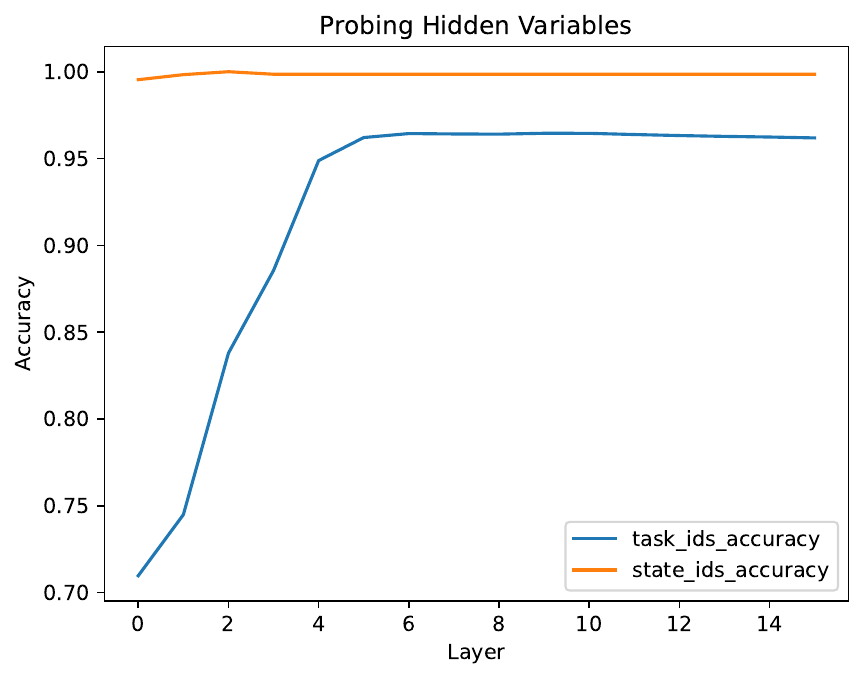}
        \caption{Task/ State recognition performance.}
        \label{fig:task_state}        % No caption here
    \end{subfigure}
    \hspace{0.02\textwidth}
    \begin{subfigure}[b]{0.4\textwidth}
        \centering
        \includegraphics[width=\textwidth]{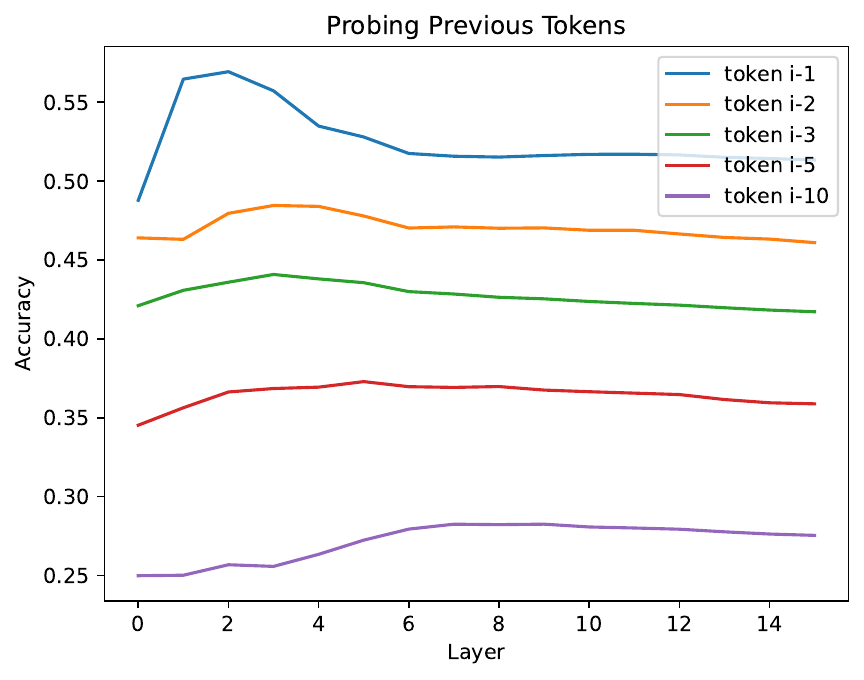}
        \caption{Token recognition performance.}
        \label{fig:token}
    \end{subfigure}
    \caption{Investigation on Transformer recognitions.}
    %\label{fig:combined}
\end{figure}

\section{Problem setup}

\subsection{Transformer Architecture}

We begin by describing the framework of Transformers as follows:

\paragraph{Attention head.} We first recall the definition of the (self-)Attention head $\attn(\cdot, Q, K, V)$. With any input matrix $M$, 
\begin{equation*}
    \attn \left( M, Q, K, V \right) = \sigma \left( M QK^T M^T \right) M V,
\end{equation*}
where $\{Q, K, V\}$ refer to the Query, Key and Value matrix respectively. The activation function $\sigma (\cdot)$ can be row-wise softmax function\footnote{Given a vector input $v$, the $i$-th element of $\softmax(v)$ is given by $\exp(v_i) / \sum_j \exp(v_j)$.} or element-wise ReLU function\footnote{$\relu(x) = \max\{ x, 0\}$}. 

\paragraph{Transformer.} Based on the architecture of Attention head, with the input matrix $M$, the definition of multi-head multi-layer Transformer $\mathrm{TF}(\cdot)$ is give by
\begin{align*}
  &  H^{(0)} = M, \quad H^{(l)} = H^{(l-1)} + \sum_{m=1}^{M_l} \attn \left( H^{(l-1)}, Q_m, K_m, V_m \right),
\end{align*}
for any $l \in [N]$, where $N$ refers to the number of Transformer layers, and $M_l$ is the number of Attention heads on the $l$-th layer.

\paragraph{One-hot encoding.} Considering a vector set with finite elements $\mathcal{S} := \{ v_1, v_2, \dots, v_m\}$, the One-hot encoding refers to mapping these vectors into $\mathrm{R}^m$, i.e, $\vecc(\cdot) : \mathcal{S} \to \mathrm{R}^m$. Each vector is mapped to an one-hot vector within $\{ e_1, e_2, \dots, e_m \}$, and for any two different vectors $v_{s}, v_{s'} \in \mathcal{S}$, there will be $\vecc(v_s) \ne \vecc (v_{s'})$.

\subsection{In-context Learning for Hidden Markov Model}
To show the expressive power of Transformers on sequence tasks, we consider a finite state case in this work, hidden Markov models (HMMs). To perform in-context learning, we collect $n$ i.i.d. demonstrate short observation sequences, i.e, $\{ o_{i,1}, \dots, o_{i,L} \}_{i=1}^n$, each sequence consists of $L-1$ observations. Denote the hidden state for each observation as $h_{i,s}$ for any $i \in [n], s \in [L]$, the HMM is defined as (more intuitive description is shown in Figure~\ref{fig:hmm}):
\begin{align*}
& \mathrm{P}(o_{i,s} | o_{i,1}, \dots, o_{i,s-1}, h_{i,1}, \dots, h_{i,s-1}, h_{i,s} ) = \mathrm{P}(o_{i,s} | h_{i,s}), \quad \forall i \in [n],  s \in [L],\\
& \mathrm{P}(h_{i,s} | o_{i,1}, \dots, o_{i,s-1}, h_{i,1}, \dots, h_{i,s-1}) = \mathrm{P}(h_{i,s} | h_{i,s-1}), \quad \forall i \in [n], s \in [L]. 
\end{align*}

\begin{figure}[t]
    \centering
    \includegraphics[width=0.5\linewidth]{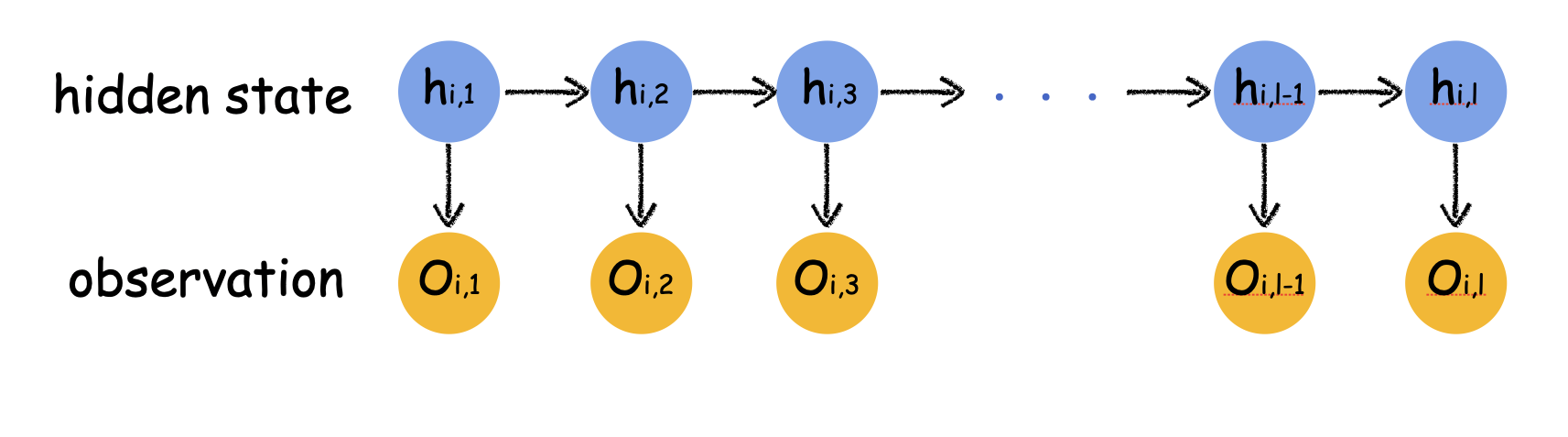}
    \caption{Illustration of Hidden Markov Model.}
    \label{fig:hmm}
\end{figure}

During testing, to predict $o_{\mathrm{test},k}$ given a long sequence history $\{ o_{\mathrm{test}, s} \}_{s=1}^{k-1}$, where $k > L$, we construct the input matrix $M_0$ for Transformers in the following format:
\begin{align*}
    M_0 := \begin{bmatrix}
        M_{0,1} & M_{0,2} & \cdots & M_{0,n} & M_{0,\te} \\
    \end{bmatrix}^T \in \mathrm{R}^{(n(L+1)+k)\times D},
\end{align*}
in which the column number $D$ will be specified later, and
\begin{align*}
   & M_{0,i} := \begin{bmatrix}
        o_{i,1} & o_{i,2} & \cdots & o_{i,L} & \od \\
        s_{(i-1)(L+1)+1} & s_{(i-1)(L+1) + 2} & \cdots & s_{i(L+1) -1} & s_{i(L+1)} \\
        v_{(i-1)(L+1)+1} & v_{(i-1)(L+1) + 2} & \cdots & v_{i(L+1) -1} & v_{i(L+1)} \\
    \end{bmatrix} \in \mathrm{R}^{D \times (L+1)}, \quad \forall i \in [n],\\
  &  M_{0,\te} := \begin{bmatrix}
          o_{\te,1} & o_{\te,2} & \cdots & o_{\te, k-1} & 0 \\
         s_{n(L+1)+1} & s_{n(L+1)+2} & \cdots & s_{n(L+1)+k-1} & s_{n(L+1)+k} \\
         v_{n(L+1)+1} & v_{n(L+1)+2} & \cdots & v_{n(L+1)+k-1} & v_{n(L+1)+k} \\
    \end{bmatrix} \in \mathrm{R}^{D \times k},
\end{align*}
where each column of $M_0$, i.e, $[o^T, s^T, v^T]$ represents the embedding for one observation, and $\od$ is the delimiter embedding, which represents the end of one sequence. The first $p+1$ dimension, i.e, $o$, refers to the token embedding, which is a one-hot vector within $\{ e_1, \dots, e_{p+1} \}$. Specifically,  we have
\begin{equation*}
    o \in \{ e_1, \dots, e_p \} \quad\text{for}~ o \ne \od, \quad \od = e_{p+1}.
\end{equation*}
The following two-dimensional vector $s$ is position embedding, which is referred to as
\begin{equation*}
   [ \pe_{pos} ]_{1} = \sin\left( \frac{pos}{1000 nk} \right), \quad  [ \pe_{pos} ]_{2} = \cos\left( \frac{pos}{1000 nk} \right) ,\quad \forall  1 \le pos \le n(L+1) + k.
\end{equation*}
And the last $(D- p - 3)$-dim vector $v$ is the fixed embedding, with elements of ones, zeros and indicators for being the test sequence:
\begin{equation*}
    v_{pos} := \big[\bm{0}_{D - p - 5}^\top, 1, \bm{1}(pos > n(L+1))^\top\big]^\top, \quad \forall 1 \le pos \le n(L+1) + k.
\end{equation*}
We will choose $D \ge 2 p^2 L$ to allocate sufficient capacity for storing the learned features. After feeding $M_0$ into the Transformer, we will obtain the output $\mathrm{TF}(M_0) \in \mathrm{R}^{(n(L+1)+k) \times D}$ with the same shape as the input, and \emph{read out} the conditional probability $\Pb(o_{\te,k} | o_{\te,1:k-1})$ from $[\mathrm{TF}(M_0)]_{(n(L+1)+k,1:p)}$ :
\begin{equation*}
    \hat{\Pb}(o_{\te, k} | o_{\te,1:k-1}) = \mathrm{read}(\mathrm{TF}(M_0)) := [\mathrm{TF}(M_0)]_{(n(L+1)+k,1:p)}.
\end{equation*}
The goal is to predict the conditional probability that is close to the true model.

\section{Theoretical analysis}\label{sec:theory}

\paragraph{Notation.} For a set $\cH$, we use $\Delta(\cH)$ to denote the set of all probability distributions on $\cH$. Let the emission operator $\mathbb T^*:\cH\rightarrow\Delta(\cO)$. For any $b\in\Delta(\cH)$, we use $\bbT^* b\in\Delta(\cO)$ to denote $\int_\cH\bbT^*(x|h)b(h)\md h$. For a vector $a$, we use $[a]_i$ to denote the $i$-th element of $a$. For a sequence $\{x_i\}_{i=1}^\infty$, we define the concatenated vector $x_{1:n}=[x_1,\ldots,x_n]^\top$. For a matrix $A\in\rR^{d_1\times d_2}$, we use $[A]_{(i,\cdot)}\in\rR^{d_2}$ and $[A]_{(\cdot,j)}\in\rR^{d_1}$ to denote the $i$-th row vector and the $j$-th column vector of $A$ respectively, use $[A]_{(i_1:i_2,\cdot)}$ and $[A]_{(\cdot, j_1:j_2)}$ to denote the submatrix 
consisting of rows $i_1$ through $i_2$, and the submatrix consisting columns $j_1$ through $j_2$ respectively. For a distribution $P:\{e_1,\ldots,e_p\}\rightarrow[0,1]$ supported on the tabular space, we define the vector $P(\cdot)=[P(e_1),\ldots,P(e_p)]^\top$\footnote{A more detailed notation table is provided in Table~\ref{tab:notation}. }.

\subsection{Low-rank HMM}\label{s:Low-rank HMM}
Our analysis is mainly based on the low-rank structure for HMM.

\begin{assumption}[Low rank structure]\label{as:low_rank}
 We suppose that the hidden state transition $\Pb:\cH\rightarrow\Delta(\cH)$ admits a low-rank structure: there exist two mappings $w^*,\psi^*:\cH\rightarrow\rR^d$ such that
\[
\Pb(h'|h) = w^*(h')^\top \psi^*(h).
\]
% Hence, the transition probability to the observation is
% \[
% \Pb(o'|h) = \bbT^*(o'|h')w^*(h')^\top \psi^*(h).
% \]   
\end{assumption}
This condition requires that the latent transition has a low-rank structure, and the underlying representation maps $w^*,\psi^*$ are unknown. This structure is commonly used in representation learning \citep{agarwal2020flambe,uehara2021representation,uehara2022provably,guo2023provably}.

\begin{assumption}[Over-complete $\gamma$-Observability]\label{as:Over-complete_gam_obs}
    There exists $\gamma>0$ such that for any distributions $d,d'\in\Delta(\cH)$, we have
    \[
    \|\bbT d - \bbT d'\|_1 \ge \gamma\|d-d'\|_1.
    \]
\end{assumption}
This condition requires that the observation space is large enough to distinguish the hidden states by observations, i.e., the condition makes the reverse mapping from observation to hidden states a contraction. Observability is necessary and commonly assumed in HMM and partially observed systems \citep{uehara2022provably,guo2023provably}, and it is essentially equivalent to assuming that the emission matrix has full-column rank \citep{hsu2012spectral}. Further, Assumption \ref{as:Over-complete_gam_obs} implies that we can reverse the inequality to obtain the contraction from observation to hidden state distributions
$\|d-d'\|_1 \le \gamma^{-1} \|\bbT d - \bbT d'\|_1.$

Therefore, we can approximate the posterior hidden state distribution by a posterior sharing the same $(L-1)$-memory (refer to Lemma \ref{lm:Exponential Stability for Low-rank Transition}). Together with the low-rank condition that renders the transition $\Pb(o_k|o_{1:k-1}):= \mu^\top(o_k)\xi(o_{1:k-1})$, we can approximate $\Pb$ by a $(L-1)$-memory transition in the following lemma
\begin{equation*}
    \hat{\Pb}_L(o_k|o_{k-1:k-L-1}):=\mu(o_k)^\top \phi(o_{k-L+1:k-1}),
\end{equation*}
where $\mu(\cdot),\phi(\cdot)\in\rR^d$ denote the representations. The representation $\phi$ is a low-rank embedding of the belief distribution of hidden states. For simplicity, here we assume $\phi$ can be represented by a linear mapping.
\begin{lemma}[Model Approximation]\label{lem:model approximation}
    Under Assumptions \ref{as:low_rank} and \ref{as:Over-complete_gam_obs}, there exists a $(L-1)$-memory transition probability $\hat{\Pb}_L$ with $L=\Theta(\gamma^{-4}\log(d/\epsilon)$ such that
    \[
    \E_{o_{1:k-1}} \big\|\Pb(\cdot\mid o_{1:k-1}) - \Pb_L(\cdot\mid o_{k-L+1:k-1})\big\|_1 \le \epsilon.
    \]
\end{lemma}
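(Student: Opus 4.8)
The plan is to recast the statement as an exponential filter-stability (forgetting) estimate for the low-rank HMM, and then reduce the conditional-law error to a belief-state error. Write the one-step-ahead predictive belief as $b_k\in\Delta(\cH)$ with $[b_k](h)=\Pb(h_k=h\mid o_{1:k-1})$, so that the target law factors as $\Pb(\cdot\mid o_{1:k-1})=\bbT b_k$. I would define the $(L-1)$-memory surrogate $\hat b_k$ by running the Bayesian filter on the truncated window $o_{k-L+1:k-1}$ from a fixed initialization at time $k-L+1$, and set $\hat{\Pb}_L(\cdot\mid o_{k-L+1:k-1})=\bbT\hat b_k$; under Assumption~\ref{as:low_rank} this is exactly the low-rank form $\mu(\cdot)^\top\phi(o_{k-L+1:k-1})$, where $\phi$ is the $\rR^d$ embedding $\bar\psi=\E_{h\sim b^{\mathrm{filt}}}[\psi^*(h)]$ of the truncated belief. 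Since $\bbT$ is a stochastic operator, it sends the zero-sum signed measure $b_k-\hat b_k$ to one of no larger $\ell_1$ norm, so $\|\bbT b_k-\bbT\hat b_k\|_1\le\|b_k-\hat b_k\|_1$, and it suffices to bound $\E_{o_{1:k-1}}\|b_k-\hat b_k\|_1$.

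Next I would exploit the low-rank structure to confine the recursion to $\rR^d$: because $[b_k](h)=w^*(h)^\top\bar\psi_k$ after a single transition, both the true and the truncated filter are determined by their $d$-dimensional coordinates, and the predict-then-correct update becomes a ratio-of-linear map on $\rR^d$; this is the reason the final memory length scales with $\log d$ rather than with $|\cH|$. I would then couple the two filters, which are fed the \emph{same} observations $o_{k-L+1:k-1}$ and differ only in their initialization, and prove a one-step (or short-block) expected contraction $\E[\,\|b_{t+1}-\hat b_{t+1}\|_1 \mid \cF_t\,]\le(1-c\gamma^{4})\,\|b_t-\hat b_t\|_1$, where $\cF_t$ denotes the history. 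The prediction step is $\ell_1$-nonexpansive on belief differences, so the contraction must come from the correction step: Assumption~\ref{as:Over-complete_gam_obs} guarantees that the two distinct beliefs induce observation laws separated by $\|\bbT b_t-\bbT\hat b_t\|_1\ge\gamma\|b_t-\hat b_t\|_1$, so an observation drawn from the true model carries, on average, enough information to pull the two posteriors together.

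Iterating the per-step contraction over the $L-1$ updates from the trivial bound $\|b_{k-L+1}-\hat b_{k-L+1}\|_1\le 2$ gives geometric decay $\lesssim(1-c\gamma^{4})^{L-1}$; taking the outer expectation over $o_{1:k-1}$ and folding the dimension-dependent constant (from a uniform, covering-based version of the contraction over the $d$-dimensional low-rank belief family) into the logarithm yields $L=\Theta(\gamma^{-4}\log(d/\epsilon))$, as claimed. I expect the main obstacle to be the expected contraction of the Bayes update: the normalizer $[\bbT b_t](o_t)$ can be small, so for atypical observations the update is expansive, and controlling it requires working in expectation under the true law and converting the $\ell_1$ observability inequality into a contraction of belief distances—a passage through $\ell_2$/Hellinger-type quantities that is responsible for the extra powers of $\gamma$ (hence $\gamma^{-4}$ rather than $\gamma^{-2}$). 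Ensuring this contraction holds uniformly over the low-rank belief family is the secondary difficulty, and is precisely what contributes the $\log d$ factor.
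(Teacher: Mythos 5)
Your overall framing---reduce the conditional-law error to a belief-state error (via nonexpansiveness of the emission operator, or equivalently the paper's bound $\sum_{h_k}\bbT(o_k\mid h_k)w^*(h_k)^\top\psi^*(h_{k-1})\le 1$), then argue the truncated filter forgets its initialization exponentially fast under $\gamma$-observability---matches the architecture of the paper's argument. The gap is in how the forgetting is obtained.

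The paper does not prove the stability estimate itself: it initializes the truncated filter from a history-independent prior $\tilde b_0$ obtained by G-optimal design, satisfying $\mathrm{KL}(b,\tilde b_0)\le d^3$ for any reachable belief $b$, and then directly invokes Theorem 14 of \citet{uehara2022provably} (restated as Lemma~\ref{lm:Exponential Stability for Low-rank Transition}) to get $\E\|b(o_{1:k-1})-b_L(o_{k-L+1:k-1})\|_1\le\epsilon$ once $L\ge C\gamma^{-4}\log(d/\epsilon)$. Your route instead asserts a one-step conditional-expectation contraction $\E[\|b_{t+1}-\hat b_{t+1}\|_1\mid\cF_t]\le(1-c\gamma^{4})\|b_t-\hat b_t\|_1$ and iterates it from the trivial bound $\|b-\hat b\|_1\le 2$. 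That step is unsubstantiated, and it is not what observability delivers: the known proofs of filter stability under $\gamma$-observability do not establish a per-step $\ell_1$ contraction---the Bayes correction can be expansive on a non-negligible set of observations---but rather an aggregated change-of-measure argument controlling expected KL/likelihood-ratio decay over blocks, and this argument only works when the approximate filter's initialization has bounded KL (likelihood ratio) relative to the true belief. That requirement is exactly why the G-optimal-design prior with $\mathrm{KL}\le d^3$ appears, and it is the true source of the $\log d$ factor in $L$, not a covering argument over the low-rank belief family as you suggest. Your sketch is internally inconsistent on this point: iterating a $(1-c\gamma^{4})$ contraction from the initial bound $2$ would yield $L=\Theta(\gamma^{-4}\log(1/\epsilon))$ with no $d$ dependence at all. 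To repair the proposal, either carry out the block-wise expected-KL contraction with a controlled initialization (essentially reproving Theorem 14 of \citet{uehara2022provably}), or cite that result as the paper does and keep only your reduction step.
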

This lemma shows that for a finite observability coefficient $\gamma$, the model approximation error can be controlled when the memory length $L-1$ is large enough. To prove this result, we bring the analysis techniques from POMDP literature \citet{guo2023transformers,uehara2022provably}. The detailed proof can be referred to Appendix \ref{ss: Proof for Model Approximation Error}.

\subsection{Main results}
\begin{assumption}\label{as:eigen_value}
    Given the data observation history $Z:=[o_{1,1:L-1},\ldots,o_{n,1:L-1}] \in \rR^{p(L-1) \times n}$, we suppose that the mean sample covariance $n^{-1}ZZ^\top$ has lower-bounded eigenvalue: $\lambda_{\min}(n^{-1}ZZ^\top) \ge \alpha.$
\end{assumption}
This assumption requires that the eigenvalues of the mean sample covariance are lower-bounded, implying that the data are distributed relatively evenly. This condition is commonly used in concentration analysis to bound the generalization error. Our main result can be formally stated as:
\begin{theorem}\label{thm:main}
Assume Assumption~\ref{as:low_rank}, \ref{as:Over-complete_gam_obs} and \ref{as:eigen_value} hold, there exists a $\cO(\ln L + T )$-layer Transformer $\mathrm{TF}_\theta$, such that for any input matrix $M_0$, with probability at least $1 - n^{-1}$ over $\{ o_{i,1}, \dots, o_{i,L} \}_{i=1}^n$:
\begin{equation*}
\begin{aligned}
    &\E_{ o_{\mathrm{test,1:k-1}}} \| \Pb(\cdot |  o_{\mathrm{test,1:k-1}} ) - \mathrm{read}\left( \mathrm{TF}_\theta(M_0) \right)\big] \|_1 \\
    &\le \underbrace{ \cO(d e^{-  \gamma^4 L})}_{\text{model approximation}} + \underbrace{\cO(p L^{1/2} e^{- \alpha T/(2L)}) }_{\text{optimization}} + \underbrace{ \cO(p L\sqrt{ \ln(nLp)}/(\sqrt{n}\alpha) + Ld/\alpha \cdot e^{-L\gamma^4})}_{\text{generalization}}.
\end{aligned}
\end{equation*}
\end{theorem}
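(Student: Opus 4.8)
The plan is to split the target $L_1$ deviation by the triangle inequality into a \emph{model approximation} term, an \emph{optimization} term, and a \emph{generalization} term, matching the three summands in the stated bound. First I would insert the $(L-1)$-memory surrogate $\hat{\Pb}_L(\cdot \mid o_{\te, k-L+1:k-1}) = \mu(o_k)^\top \phi(o_{\te, k-L+1:k-1})$ from Lemma~\ref{lem:model approximation}. Taking $\epsilon = d\, e^{-\gamma^4 L}$ there (equivalently $L = \Theta(\gamma^{-4}\log(d/\epsilon))$) immediately controls $\E_{o_{\te,1:k-1}} \|\Pb(\cdot \mid o_{\te,1:k-1}) - \hat{\Pb}_L(\cdot \mid o_{\te, k-L+1:k-1})\|_1 \le \cO(d\, e^{-\gamma^4 L})$, which is the first term. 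It then remains to show that a transformer can realize an accurate estimate of $\hat{\Pb}_L$ by learning the regression map $\mu$ \emph{in-context} from the $n$ demonstrations, and to bound the error of that estimate.

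The construction proceeds in two stages mirroring the empirical layerwise picture. In the \textbf{lower layers} ($\cO(\ln L)$ of them) the transformer extracts the local feature window: each query position aggregates its preceding $L-1$ one-hot tokens into the concatenated feature $\phi(o_{k-L+1:k-1})$, where the linear belief embedding of Section~\ref{s:Low-rank HMM} is applied. The logarithmic depth arises from a doubling/hierarchical aggregation, combining neighboring token blocks of geometrically growing width so that a window of length $L-1$ is assembled in $\lceil \log_2 L \rceil$ attention rounds; this is also what produces the local-then-disentangled peak-shifting pattern observed empirically. In the \textbf{upper layers} ($\cO(T)$ of them) the transformer unrolls $T$ steps of gradient descent on the in-context least-squares objective $\widehat{\mathcal{L}}(W) = \tfrac{1}{n}\sum_{i=1}^n \|\vecc(o_{i,L}) - W \phi(o_{i,1:L-1})\|_2^2$, using the delimiter rows to mask demonstration targets and the test-indicator in $v$ to route the final prediction. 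Each gradient step is implementable by one attention layer that forms the residual $\vecc(o_{i,L}) - W^{(t)} \phi_i$, multiplies by the feature matrix $Z$, and applies the update; the readout at the last position returns $W^{(T)} \phi(o_{\te,k-L+1:k-1})$, giving $\mathrm{read}(\mathrm{TF}_\theta(M_0))$.

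For the \textbf{optimization} term I would analyze the (exactly linear) gradient-descent recursion on the quadratic $\widehat{\mathcal{L}}$. By Assumption~\ref{as:eigen_value} the empirical feature covariance $n^{-1}ZZ^\top$ is $\alpha$-strongly convex, while boundedness of the one-hot features forces the smoothness constant $\beta = \lambda_{\max}(n^{-1}ZZ^\top) = \cO(L)$. With step size $\eta \asymp 1/\beta$ the iterates contract geometrically, $\|W^{(T)} - \widehat{W}\|_F \le (1 - \eta\alpha)^T \|W^{(0)} - \widehat{W}\|_F$, and translating the parameter gap into an $L_1$ prediction gap through the feature norm $\|\phi\|_2 = \cO(L^{1/2})$ over the $p$ output coordinates yields the $\cO(p L^{1/2} e^{-\alpha T/(2L)})$ bound, where $\widehat{W}$ is the empirical risk minimizer.

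The \textbf{generalization} term compares the in-context minimizer $\widehat{W}$ against the population-optimal regressor targeting $\hat{\Pb}_L$. Here I would write $\widehat{W} - W^{*} = (n^{-1}ZZ^\top)^{-1} \cdot n^{-1} Z E^\top$, where $E$ collects the centered regression noise, bound the inverse-covariance operator norm by $\alpha^{-1}$ (Assumption~\ref{as:eigen_value}), and apply a sub-Gaussian/matrix concentration inequality with a union bound over the $p(L-1)$ feature coordinates and $p$ targets to obtain the $\cO(p L\sqrt{\ln(nLp)}/(\sqrt{n}\alpha))$ statistical-error piece, holding with probability $1 - n^{-1}$. The residual $Ld/\alpha \cdot e^{-L\gamma^4}$ piece is the \emph{propagated model-approximation bias}: because the demonstrations are labeled by the true HMM rather than by $\hat{\Pb}_L$, the regression inherits the Lemma~\ref{lem:model approximation} mismatch, amplified by the inverse covariance $\alpha^{-1}$ and the feature-dimension scale $\cO(L)$. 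The main obstacle is precisely this last coupling: one must show that the transformer's in-context gradient updates faithfully implement the intended least-squares dynamics (so that the clean convergence analysis applies) \emph{and} that the bias from regressing against true-model labels, rather than the idealized $(L-1)$-memory surrogate, stays controlled uniformly and does not blow up when passed through the ill-conditioned ($\kappa \asymp L/\alpha$) normal equations. Assembling the three pieces by the triangle inequality then gives the stated bound.
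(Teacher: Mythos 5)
Your proposal is correct and follows essentially the same route as the paper: the same three-term triangle-inequality decomposition (model approximation via Lemma~\ref{lem:model approximation}, optimization via geometric convergence of $T$ attention-implemented gradient-descent steps under Assumption~\ref{as:eigen_value} with smoothness $\cO(L)$, and generalization via the closed-form least-squares error split into a sub-Gaussian concentration term and a propagated model-approximation bias bounded by $Ld/\alpha\cdot e^{-L\gamma^4}$), together with the same $\cO(\ln L)$-layer hierarchical window-assembly plus $\cO(T)$-layer GD-unrolling construction. The only cosmetic difference is a slight notational conflation of $\Pb_L$ and $\hat\Pb_L$ at the start, which does not affect the argument.
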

The proof is in Appendix~\ref{pf:thm_main}. Theorem~\ref{thm:main} demonstrates that a sufficiently large Transformer can accurately approximate the HMM, revealing its strong expressive power in modeling sequential data.

\paragraph{Sources of errors.}
As shown in Lemma~\ref{lem:model approximation}, a fixed-length memory model is sufficient to approximate the full-memory transition probabilities, introducing only a small ``model approximation'' error. Our Transformer construction is based primarily on this approximation, denoted as $\Pb_L$. The ``generalization'' error arises due to the use of a finite sample size $n$: we learn $\Pb_L$ from $n$ i.i.d. samples, and the optimal learned model we can obtain, $\hat{\Pb}_L$, remains close to $\Pb_L$ as long as $n$ is sufficiently large. The final source of error, the ``optimization'' error, stems from the finite capacity of the Transformer. Since we approximate $\hat{\Pb}_L$ using a Transformer with a limited number of layers, a gap between the two remains. However, this gap can be made arbitrarily small by increasing the model size (e.g., number of layers), thereby improving the approximation accuracy.

\paragraph{Layerwise modeling.} Our explicit construction aligns closely with the empirical observations presented in Section~\ref{sec:empirical}. The construction proceeds in several stages. First, in the lower layers, the Transformer learns information from the neighborhood $L$ tokens, gradually incorporating information from nearby to more distant tokens, which is consistent with the patterns shown in Figure~\ref{fig:token}. In the upper layers, to take the final prediction, the learned features become decoupled and are used to infer a causal structure aligned with the underlying HMM task, which corresponds to Figure~\ref{fig:order} and the rising-then-falling trend observed in Figure~\ref{fig:token}. Finally, the overall progression—from token-level feature learning to task-level abstraction—matches the trends in Figure~\ref{fig:task_state}, reflecting a clear layerwise hierarchy in how Transformers process sequential information.

\subsection{Extension to indistinguishable situation}
In NLP tasks, a natural assumption is that the cardinality of hidden state space may be larger that the observation space evidence, or the true number of observations that can reveal the hidden states is small, called ``weak revealing" cases. In this section, we show that Transformer can still perform well under such ambiguous setting.
Inspired by the overcomplete POMDPs \citep{liu2022partially}, we start by expanding the output space of emission operators.

\begin{assumption}[Under-complete $\gamma$-Observability]\label{as:under-complete_gam_obs}
    Let operator $\bbM:\Delta(\cH)\rightarrow \Delta_m(\cO\times\cdots\times\cO)$ such that $\bbM d_\cH:\cO\times\cdots\times\cO\rightarrow\rR$ denotes $\int_{\cO\times\cdots\times\cO} \bbM(o_{t:t+m}|h_t) d_\cH(h_t) \mathrm{d}h_t$, where $m$ is a small constant such that $m < L$.
    There exists $\tilde{\gamma}>0$ such that for any distributions $d,d'\in\Delta(\cH)$, we have
    \[
    \|\bbM b - \bbM b'\|_1 \ge \tilde \gamma\|b - b'\|_1.
    \]
\end{assumption}

Then the corresponding theorem should be:\footnote{The conditional probability in Theorem~\ref{thm:extend} is related to a $m$-step prediction, which induces that the cardinality of observation is $p^m$. So we enlarge $D$ such that $D \ge 2 p^m L$, and the read out function should be $ \hat{\Pb}(o_{\te, k} | o_{\te,1:k-1}) = \mathrm{read}(\mathrm{TF}(M_0)) := [\mathrm{TF}(M_0)]_{(n(L+1)+k,(L+1)(p+3)+1 : (L+1)(p+3)+p^m)}$. }

\begin{theorem}\label{thm:extend}
Denote the data observation $Z' := [o_{1,1:L-m}, \dots, o_{n,1:L-m}] \in \rR^{p(L-m) \times n}$.
Assume Assumption~\ref{as:low_rank}, \ref{as:under-complete_gam_obs} hold, and $\lambda_{\min} (n^{-1} Z' Z^{' T}) \ge \alpha$,
there exists a $\cO(\ln L  + T)$-layer Transformer $\mathrm{TF}_\theta$, such that for any input matrix $M_0$, with probability at least $1 - n^{-1}$ over $\{ o_{i,1}, \dots, o_{i,L} \}_{i=1}^n$:
\begin{equation*}
\begin{aligned}
    &\E_{ o_{\mathrm{test,1:k-1}}} \| \Pb(\cdot |  o_{\mathrm{test,1:k-1}} ) - \mathrm{read}\left( \mathrm{TF}_\theta(M_0) \right) \|_1\\
    &\quad\le \underbrace{ \cO(d e^{-  \tilde{\gamma}^4 L})}_{\text{model approximation}} + \underbrace{\cO(p^m L^{1/2} e^{- \alpha T/(2L)}) }_{\text{optimization}} + \underbrace{ \cO(p^m L\sqrt{ \ln(nLp)}/(\sqrt{n}\alpha) + Ld/\alpha \cdot e^{-L\tilde{\gamma}^4})}_{\text{generalization}}.
\end{aligned}
\end{equation*}
\end{theorem}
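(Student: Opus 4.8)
The plan is to follow the same three-stage decomposition used in the proof of Theorem~\ref{thm:main} (model approximation, optimization, generalization), but with the single-step emission operator $\bbT$ replaced throughout by the $m$-step window operator $\bbM$ of Assumption~\ref{as:under-complete_gam_obs}. The conceptual device is to treat each length-$m$ window $o_{t:t+m}$ as a single ``super-observation'' drawn from an effective alphabet of size $p^m$. Under this regrouping, Assumption~\ref{as:under-complete_gam_obs} plays exactly the role that Assumption~\ref{as:Over-complete_gam_obs} played before: it supplies a contraction $\|b-b'\|_1 \le \tilde\gamma^{-1}\|\bbM b - \bbM b'\|_1$ from the super-observation distribution back to the belief over $\cH$. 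Once this correspondence is fixed, every $\gamma$ becomes $\tilde\gamma$ and every factor counting the observation cardinality $p$ becomes $p^m$, which already reproduces the shape of the stated bound.

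First I would establish the multi-step analog of Lemma~\ref{lem:model approximation}. Exactly as in Lemma~\ref{lm:Exponential Stability for Low-rank Transition}, I would show that two posterior beliefs sharing the same trailing window history converge exponentially: each application of the $\tilde\gamma$-observable emission contracts the $\ell_1$ distance between beliefs, so after $\Theta(\tilde\gamma^{-4}\log(d/\epsilon))$ steps the discrepancy falls below $\epsilon$. Combined with Assumption~\ref{as:low_rank}, which keeps the predictive transition low-rank (the $m$-step predictive map still factors through the rank-$d$ belief), this yields a fixed $(L-m)$-memory approximation $\Pb_L$ whose model-approximation error is $\cO(d e^{-\tilde\gamma^4 L})$. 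The only genuinely new bookkeeping relative to the single-step case is that the belief update now consumes overlapping windows, so I must verify that the contraction constant $\tilde\gamma$ still propagates cleanly through the telescoping of successive window updates.

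Next I would carry over the explicit Transformer construction. In the lower layers the attention heads aggregate the $L$-token neighborhood, but now each position must additionally assemble its forward window $o_{t:t+m}$; this is precisely the step that composes features from multiple future observations, and it is why we enlarge the embedding width to $D \ge 2p^m L$ so that the $p^m$-valued window features can be stored. The upper layers then decouple these features and realize the low-rank predictive map of $\Pb_L$, reading out a $p^m$-dimensional conditional probability as specified in the footnote. The optimization error is controlled by the same geometric-convergence argument that produced $e^{-\alpha T/(2L)}$ in Theorem~\ref{thm:main}, with the prefactor inflated to $p^m L^{1/2}$ because the output now lives in the enlarged alphabet.

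Finally, the generalization term follows from the concentration analysis applied to $\hat{\Pb}_L$ learned from the $n$ windows, using $\lambda_{\min}(n^{-1}Z'Z'^\top)\ge\alpha$ in place of Assumption~\ref{as:eigen_value}; the effective feature dimension scaling with $p^m$ gives $\cO(p^m L\sqrt{\ln(nLp)}/(\sqrt n\alpha))$, while the residual $Ld/\alpha\cdot e^{-L\tilde\gamma^4}$ propagates the memory-truncation error. The main obstacle I anticipate is the multi-step model-approximation lemma: unlike the disjoint single-observation updates, the overlapping windows couple adjacent belief updates, so I must show that $\tilde\gamma$-observability of $\bbM$ still certifies a uniform per-step contraction and that the low-rank structure survives the window regrouping without inflating the rank $d$.
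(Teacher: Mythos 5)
Your proposal follows essentially the same route as the paper's proof: the same three-term decomposition, an $m$-step-window analog of the model-approximation lemma with $\bbM$ and $\tilde\gamma$ in place of $\bbT$ and $\gamma$, the same Transformer construction augmented with one-hot encoded window features (future matrices $F_1,\ldots,F_m$ and a $2p^m$-head gradient-descent stage), and the same least-squares concentration argument with $Z'$ replacing $Z$. The overlap concern you flag is handled in the paper simply by defining the belief-update operator directly on the $m$-step window and reusing the exponential-stability result, so no additional argument beyond the single-step case is given there either.
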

The proof is in Appendix~\ref{pf:thm_extend}. From Theorem~\ref{thm:extend}, we show that Transformers can still learn HMMs efficiently under such ``weak revealing'' case, by concatenating several steps of future observations.
\begin{figure}[t]
    \centering
    \includegraphics[width=0.5\linewidth]{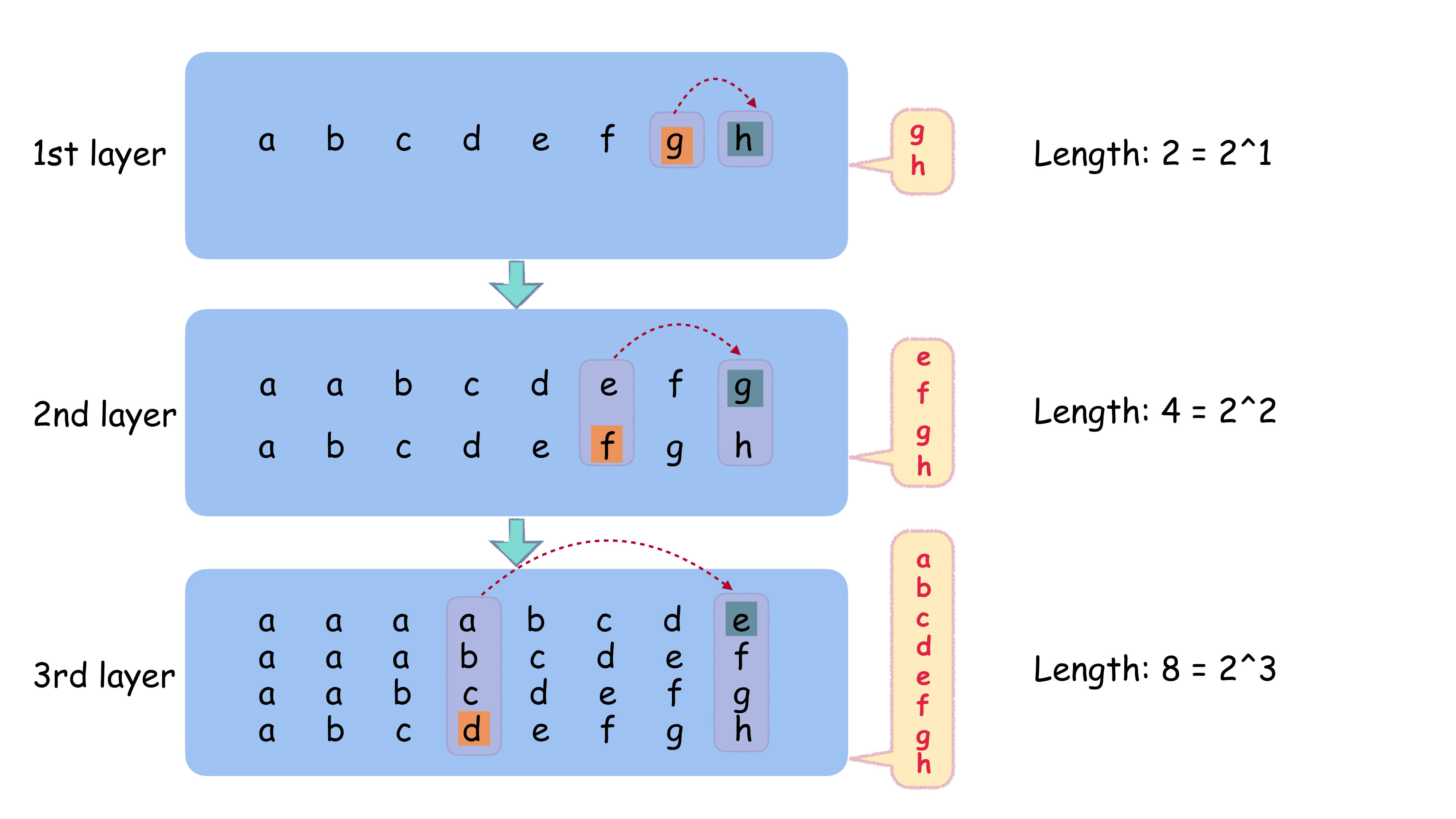}
    \caption{Illustration of Feature learning process.}
    \label{fig:feature}
\end{figure}

\section{Transformer Construction and Proof Sketches}
\subsection{Proof Sketches for Theorem~\ref{thm:main}}
Recalling Lemma~\ref{lem:model approximation}, our Transformer construction is mainly based on approximating $\Pb_L(\cdot | o_{\te, k-L+1:k-1})$ with expression: $\Pb_L(o_k | o_{k-L+1:k-1}) = \mu^\top(o_k) \phi(o_{k-L+1:k-1}).$

To approximate the error in prediction, we can take the following decomposition:
\begin{equation}\label{eq:decompose_main}
\footnotesize
\begin{aligned}
& \quad \E_{o_{\te, 1:k-1}} \| \Pb (\cdot | o_{\te,1:k-1}) - \mathrm{read}(\mathrm{TF}_\theta(M_0)) \|_1 \\
&\le \underbrace{ \E_{o_{\te, 1:k-1}} \| \Pb (\cdot | o_{\te,1:k-1}) - \Pb_L(\cdot| o_{\te,k-L+1:k-1}) \|_1}_{\epsilon_1:\text{model approximation}} \\
& + \underbrace{\E_{o_{\te, 1:k-1}} \|  \Pb_L(\cdot| o_{\te,k-L+1:k-1}) - \hat{\Pb}_L(\cdot| o_{\te,k-L+1:k-1})\|_1}_{\epsilon_2:\text{generalization}} \\
& + \underbrace{ \E_{o_{\te, 1:k-1}} \|\hat{\Pb}_L(\cdot| o_{\te,k-L+1:k-1})  - \mathrm{read}(\mathrm{TF}_\theta(M_0)) \|_1 }_{\epsilon_3:\mathrm{optimization}},
\end{aligned}
\end{equation}
where $\hat{\Pb}_L(\cdot| o_{\te,k-L+1:k-1})\in\rR^p$ refers to the optimal approximation for $\Pb_L$ based on $n$ i.i.d. samples we collected.
Considering the one-hot format of $o_k$ and the linear assumption on $\phi(\cdot)$, we can express both $\mu(\cdot)$ and $\phi(\cdot)$ as linear function, which implies that 
\begin{equation*}
 \Pb_L(\cdot | o_{k-L+1:k-1})  := W_* o_{k-L+1:k-1},
\end{equation*}
for some $W_* \in \rR^{p \times p(L-1)}$\footnote{As the $(p+1)$-th dimension is designed only for $\od$, we consider the observation as a $p$-dim vector for simplicity.}. Accordingly, we have $\hat{\Pb}_L(\cdot| o_{\te,k-L+1:k-1}):=\hat{W} o_{\te,k-L+1:k-1}$, in which
\begin{equation}\label{eq:prob_vec_n_m}
\begin{aligned}
    \hat{W} :=  \arg \min_W \mathcal{L}(W) := \arg \min_W \sum_i \| o_{i,L} - W z_i \|_2^2. 
\end{aligned}    
\end{equation}
Here we use the short-hand notation $z_i:=o_{i,1:L-1} \in \mathrm{R}^{p(L-1)}$. 
From Lemma \ref{lem:model approximation}, we obtain $\epsilon_1 = \cO(d e^{- \gamma^4 L})$. And in the following analysis, we focus on bounding $\epsilon_2$ and $\epsilon_3$, respectively.

\subsubsection{Transformer Construction}
To predict the conditional probability vector $\hat{\Pb}_L(\cdot |  o_{\mathrm{test},k-L+1:k-1})$, the transformer proceeds in three main steps: (i) it first learns the $(L-1)$-step history feature $o_{i,1:L-1}$ associated with $o_{i,L}$, as well as $o_{\te,k-L+1:k-1}$ associated with $o_{\te,k}$, (ii)it then performs linear regression based on Eq.~\eqref{eq:prob_vec_n}, (iii)finally, it approximates $\hat{\Pb}_L(\cdot |  o_{\mathrm{test},k-L+1:k-1})$ using $\hat{W}$ and $o_{\te, k-L+1:k-1}$.
The explicit construction of the Transformer is detailed below:
\paragraph{Decoupled feature learning.} Before formally construction, for any step index $1 \le r < L$, we define history and future matrix $Z_r, F_r \in \rR^{(n(L+1)+k) \times (p+3)}$ for further analysis:
\begin{equation*}
\footnotesize
\begin{aligned}
    [Z_r]_{(t,\cdot)} :=& \left\{ \begin{aligned}
        & [M_0]_{(t-r,1:p+3)}, \quad r < t \le n(L+1)+k,\\
        & [M_0]_{(1,1:p+3)}, \quad 1 \le t \le r,
    \end{aligned}\right.\\
    [F_r]_{(t,\cdot)} :=& \left\{ \begin{aligned}
        & [M_0]_{(t+r,1:p+3)}, \quad 1 \le t \le n(L+1)+k-r,\\
        & [M_0]_{(n(L+1)+k,1:p+3)}, \quad n(L+1)+k-r < t \le n(L+1)+k.
    \end{aligned}\right.
\end{aligned}
\end{equation*}
To be specific, for each $o_{i,s}$, $Z_r$ and $F_r$ are corresponding to $o_{i,s-r}$ (history observation) and $o_{i,s+r}$ (future observation) respectively. To learn these two types of features, we use two special matrices on the position embedding vector of each observation:
\begin{equation*}
  A := \beta_1  \begin{bmatrix}
      &  \cos(\frac{1}{1000nk}) & \sin(\frac{1}{1000nk}) \\
      & - \sin(\frac{1}{1000nk}) & \cos(\frac{1}{1000nk}) \\
    \end{bmatrix}, \quad  B := \beta_1  \begin{bmatrix}
      &  \cos(\frac{1}{1000nk}) & -\sin(\frac{1}{1000nk}) \\
      &  \sin(\frac{1}{1000nk}) & \cos(\frac{1}{1000nk}) \\
    \end{bmatrix}.
\end{equation*}
For $t_1,t_2\in[1:n(L+1)+k]$ with position embedding vectors $s_{t_1}, s_{t_2}$, we have
\begin{equation*}
    s_{t_1}^T A s_{t_2} = \beta_1 \cdot \cos \left( \frac{t_1 - t_2 -1}{1000nk} \right), \quad s_{t_1}^T B s_{t_2} = \beta_1 \cdot \cos \left( \frac{t_1 - t_2 +1}{1000nk} \right).
\end{equation*}
\begin{figure}[t]
    \centering
    \includegraphics[width=0.6\linewidth]{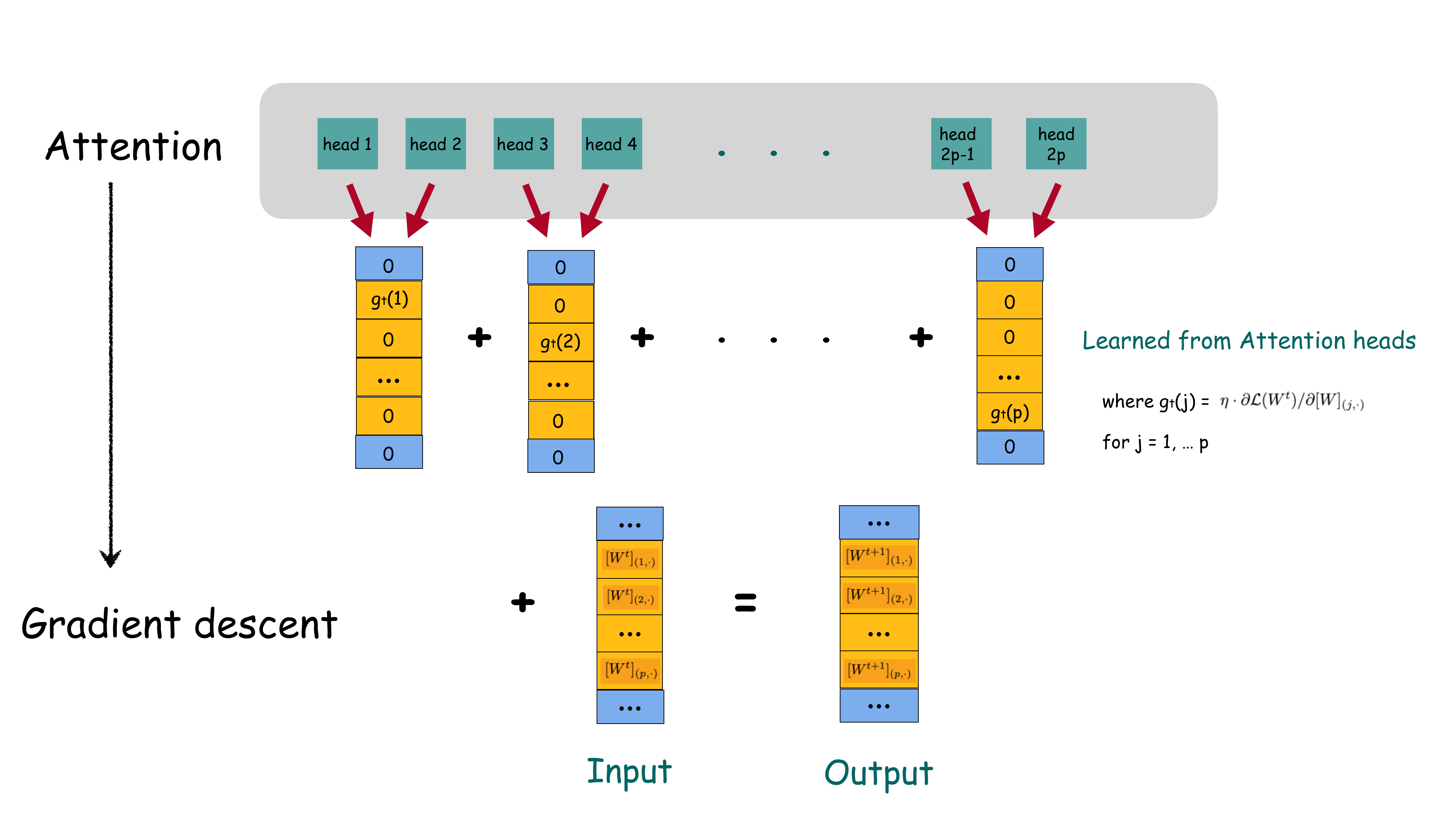}
    \caption{Illustration of gradient descent performance.}
    \label{fig:gd}
\end{figure}
By using A in Query-Key matrix with enough large $\beta_1$, and applying the softmax activation along with a carefully designed Value matrix, we can learn $Z_1$ after the first Attention layer. On the second layer, we again use $A$ to design Query-Key matrix, which enables the learning of $Z_2, Z_3$ (see Figure~\ref{fig:feature} as a detailed illustration). Repeating such process for $\cO(\ln L)$ layers, we will obtain $\{ Z_1, \dots, Z_{L-1}\}$ using $\cO(\ln L)$-layer single-head Attention. Also, use matrix $B$, we can obtain $F_1$ on the following layer. The output matrix after these decoupled-feature layers should be
\begin{equation*}
    M_{\text{dec}} = [[M_0]_{(\cdot, p+3)}, Z_1, Z_2, Z_3, \dots, Z_{L-1}, F_1, [M_0]_{(\cdot, (L+1)(p+3)+1:D)} ].
\end{equation*}
\paragraph{Gradient descent performing and final prediction.} The following $\cO(T)$-layer architecture is designed to learn $\hat{\Pb}_L(\cdot | z_k)$ based on history information $\{Z_1, \dots, Z_{L-1}\}$. 
To be specific, from Eq.~\eqref{eq:prob_vec_n_m}, we need to take linear regression to estimate a matrix $\hat{W} \in \rR^{p \times p(L-1)}$.
To perform such estimation process for $W$, we construct a $2p$-head $\cO(T)$-layer Attention. Each layer can perform single gradient descent step on $\mathcal{L}(W)$, starting from an initial value $0$. Each row of $W$ is assigned to two independent attention heads for parallel learning (see Figure~\ref{fig:gd} for detailed illustration). The construction closely follows the method proposed in \citet{bai2024transformers}, with the key difference being that we use $F_1$ to pick up $n$ samples for the gradient descent updating. After $\cO(T)$-step gradient descent, we use the learned $\{[\hat W]_{(1,\cdot)}, \dots, [\hat W]_{(p,\cdot)} \}$ and $o_{\te,k-L+1:k-1}$ to predict $\hat{\Pb}_L(\cdot |  o_{\mathrm{test},k-L+1:k-1})$. The corresponding error $\epsilon_3 = \cO(p L^{1/2} e^{-\alpha T / (2L)})$ can be estimated using Lemma~\ref{lem:reg}.

\subsubsection{Generalization Error Approximation}
Using the notations for labels and covariates $O:=[o_{1,L}, \ldots, o_{n,L}] \in \rR^{p \times n},~ Z=[o_{1,1:L-1},\ldots,o_{n,1:L-1}] \in\rR^{p(L-1)\times n}$, the least square estimator has the following closed-form solution: $ \hat{W} := O Z^T (ZZ^T )^{-1} $.

Then, denoting $z_\te := o_{\te,k-L+1:k-1}$ and error $\Delta := O - W_* Z$, we can take the estimator into $\epsilon_2$ and upper bound it by
\begin{equation}\label{eq:eps_2}
\footnotesize
\begin{aligned}
    \epsilon_2\le& \sum_{j=1}^p \sqrt{L} \| [W_*]_{(j,\cdot)} - [O]_{(j,\cdot)} Z^T(ZZ^T)^{-1} \|_2
    \le \frac{\sqrt{L}}{n\alpha} \sum_{j=1}^p \| [\Delta]_{(j,\cdot)} Z^T\|_2\\
    \le& \frac{\sqrt{L}}{n\alpha} \sum_{j=1}^p \| \big([\Delta]_{(j,\cdot)} - \E[[\Delta]_{(j,\cdot)}] \big) Z^T \|_2 + \frac{\sqrt{L}}{n\alpha} \sum_{j=1}^p \| \E[[\Delta]_{(j,\cdot)}]\big) Z^T \|_2
\end{aligned}
\end{equation}
where the second inequality uses the definition $O=\Delta + W_* Z$ and $\lambda_{\min}(ZZ^\top)\ge \alpha$ in Assumption \ref{as:eigen_value}, and invokes the Cauchy-Schwartz inequality. For the first term on the last row of \eqref{eq:eps_2}, we use the matrix concentration in Lemma \ref{lem:concen} to obtain that with a high probability,
\begin{align*}
    \| \big([\Delta]_{(j,\cdot)} - \E[[\Delta]_{(j,\cdot)}] \big) Z^T \|_2 \le \cO\big(\sqrt{nL \ln(nLp^2)}\big).
\end{align*}
For the second term on the last row of \eqref{eq:eps_2}, based on the observation that $\E[[\Delta]_{(j,i)}]=\E_{o_{i,1:k-1}}[\Pb(e_j\mid o_{1:k-1}) - \Pb_L(e_j\mid o_{k-L+1:k-1})]$, we can bound it by $ \cO(Ld/\alpha \cdot e^{-L\gamma^4})$ via Lemma \ref{lem:model approximation}.

\subsection{Proof Sketches for Theorem~\ref{thm:extend}}
The error analysis and the corresponding Transformer construction follow a similar approach to Theorem~\ref{thm:extend}, with one key modification. After the decoupled feature extraction stage, the resulting output matrix takes the following form:
\begin{equation*}
    M_{\text{dec}} = [ [M_0]_{(\cdot, 1:p+3)}, Z_1, Z_2, \dots, Z_{L-m}, F_1, F_2, \dots, F_m, [M_0]_{(\cdot,(L+1)(p+3)+1 :D)}].
\end{equation*}
Before feeding it into subsequent Attention layers, we apply an one-hot encoding function $\vecc(\cdot)$ to each row of $\{ [M_0]_{(\cdot, 1:p)}, [F_1]_{(\cdot,1:p)}, \dots, [F_{m-1}]_{(\cdot,1:p)}\}$, which correspond to the current and future observations at each time step.

\section{Conclusion}\label{sec:conclu}
This work advances our theoretical and empirical understanding of how Transformers achieve strong generalization across diverse sequence learning tasks. By analyzing their layerwise behavior and constructing explicit architectures for modeling HMMs, we demonstrate that Transformers gradually transition from learning local, token-level features in lower layers to forming decoupled representations in upper layers. These findings not only align with empirical observations but also provide a principled explanation for the Transformer’s expressiveness and efficiency in multi-task and in-context learning settings. Our results contribute to a deeper understanding of how modern sequence models internalize structure and support general-purpose sequence processing. In future work, we aim to extend our analysis to more general assumptions and more complex model settings.

\bibliography{sample}
\bibliographystyle{apalike}

\clearpage
\newpage

\appendix

\section{Related works}

\paragraph{Expressiveness of Transformer.} The expressive power of Transformers has been studied extensively from various perspectives. For example, \citet{akyurek2022learning, von2023transformers, mahankali2023one, dai2022can} demonstrate that a single attention layer is sufficient to compute a single gradient descent step. \citet{garg2022can, bai2024transformers, guo2023transformers} show that Transformers can implement a wide range of machine learning algorithms in context. Similarly, \citet{xie2021explanation, wang2023large, jiang2023latent} establish that Transformers can approximate Bayesian optimal inference. Other works have explored different capabilities of Transformers: \citet{liu2022transformers} show they can learn shortcuts to automata, \citet{lin2023transformers} demonstrate their ability to implement reinforcement learning algorithms, and \citet{nichani2024transformers} reveal their capacity to learn Markov causal structures under a fixed transition matrix, \citet{sander2024transformers, wu2025transformers} show the expressiveness power on learning autoregressive models.

\paragraph{Hidden Markov Model.} Identification for uncontrolled partially observable systems has been broadly studied, especially for the spectral learning based models \citep{hsu2012spectral,van1995unifying,song2010hilbert,hamilton2013modelling,kulesza2015spectral}. Intuitively, all the frameworks require some observability conditions to reveal the hidden states via sufficient observations. For complex sequential spaces with a large hidden state space, there is another line of work considering structured latent transitions, allowing for more efficient inference and computation complexity \citep{siddiqi2005fast,felzenszwalb2003fast,dedieu2019learning,siddiqi2010reduced,chiu2021low}. Especially, \citet{chiu2021low} consider a low-rank structure for hidden state transitions. Such a low-rank structure is also widely studied in partially observable Markov Decision processes \citep{uehara2022provably,guo2023provably,zhong2022gec,wang2022embed,zhan2022pac}. The most related ones to our work are \citet{uehara2022provably,guo2023provably}, which utilize the low-rank latent transition and observability to avoid a long-memory learning and inference. Instead, they can approximate the posterior distribution of the hidden states given whole observations by a distribution conditioned on a fixed-size history.

\section{Additional experiment details and results}\label{app:experiment}

\subsection{Experiment settings}
Here we construct a dataset generated by a mixture of Hidden Markov Models (HMMs). Each HMM is used to model a tasks-specific distribution, and by mixing them we get a dataset similar to a pre-training corpus to learn language modeling on. In specific, we randomly simulate 8192 HMMs. The generation process is as follows. There is an initial task distribution on which we sample the HMM id. Each HMM composes of 128 hidden states randomly transiting between each other. Each next state depends purely on the previous state, making the sequence of hidden states Markovian. All HMMs share a 16-token vocabulary. Each hidden state is associated with an emission distribution to randomly output a token. We sample 131k data, which allows training for 64 epochs, with 64 steps in each epoch on a batch size of 32.
We build a transformer of 16 layers and 16 heads in each layer, and a hidden state dimension of 1024. The experiments run on a single V100 GPU with 16 GB of memory for 10 hours. The mixture-of-HMMs simulation runs with default multiprocessing of Python.

\subsection{Additional results}
See Figure~\ref{fig:hmm_heatmap} for the attention heatmap.

\begin{figure}[h]
    \centering
    \includegraphics[width=0.4\linewidth]{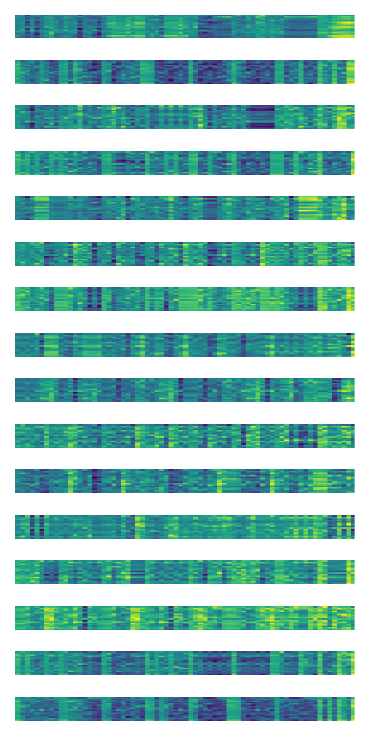}
    \caption{Attention of the Transformer on in-context learning inputs. The y-axis denotes layers and attention heads within each layers, and the x-axis denotes the attention of the last token on all previous tokens in the ICL input (including both demonstrative examples and the test input).}
    \label{fig:hmm_heatmap}
\end{figure}

\section{Notation Table}

\begin{table}[h]
\centering 
\caption{The table of notations used in this paper.}
\begin{tabular}{c|l}
\toprule\midrule
\textbf{Notation} & \textbf{Description} \\
\midrule
$\Delta(\cH)$ & the set of all probability distributions on $\cH$ \\
$\mathbb T^*$ & the emission operator \\
$\bbT^* b$ & $\int_\cH\bbT^*(x|h)b(h)\md h$ \\
$e_j$ & one-hot vector \\
$[a]_j$ & the $i$-th element of vector $a$ \\
$x_{1:n}$ & concatenated vector $[x_1,\ldots,x_n]^\top$ \\
$[A]_{(i,\cdot)}$ & the $i$-th row vector of $A$ \\
$[A]_{(\cdot,j)}$ & the $j$-th column vector of $A$ \\
$[A]_{(i_1:i_2,\cdot)}$ & the submatrix 
consisting of rows $i_1$ through $i_2$ of $A$ \\
$[A]_{(\cdot, j_1:j_2)}$ & the submatrix consisting columns $j_1$ through $j_2$ of $A$ \\
$P(\cdot)$ & the vector $[P(e_1),\ldots,P(e_p)]^\top$ for a distribution $P:\{e_1,\ldots,e_p\}\rightarrow[0,1]$ \\
$L$& sequence length on training samples  \\
$\gamma$ & observability coefficient\\
$p$ & observation state number \\
$d$ & feature dimension in transition matrix low-rank structure \\
$n$ &sequence sample number \\
$k$ & sequence length on test sample \\
$T$ & the number of gradient descent steps after feature obtaining \\

\bottomrule
\end{tabular}
\label{tab:notation}
\end{table}

\section{Proofs for Theorem \ref{thm:main}}\label{pf:thm_main}
Recalling Lemma~\ref{lem:model approximation}, our Transformer construction is mainly based on approximating $\Pb_L(\cdot | o_{\te, k-L+1:k-1})$ with expression:
\begin{equation*}
    \Pb_L(o_k | o_{k-L+1:k-1}) = \mu(o_k)^T \phi(o_{k-L+1:k-1}).
\end{equation*}
To approximate the error in prediction, we can take the following decomposition:
\begin{equation}\label{eq:decompose}
\begin{aligned}
&\E_{o_{\te, 1:k-1}} \| \Pb (\cdot | o_{\te,1:k-1}) - \mathrm{read}(\mathrm{TF}_\theta(M_0)) \|_1\\
&\le \underbrace{ \E_{o_{\te, 1:k-1}} \| \Pb (\cdot | o_{\te,1:k-1}) - \Pb_L(\cdot| o_{\te,k-L+1:k-1}) \|_1}_{\epsilon_1:\text{model approximation}} \\
& + \underbrace{\E_{o_{\te, 1:k-1}} \|  \Pb_L(\cdot| o_{\te,k-L+1:k-1}) - \hat{\Pb}_L(\cdot| o_{\te,k-L+1:k-1})\|_1}_{\epsilon_2:\text{generalization}} \\
& + \underbrace{ \E_{o_{\te, 1:k-1}} \|\hat{\Pb}_L(\cdot| o_{\te,k-L+1:k-1})  - \mathrm{read}(\mathrm{TF}_\theta(M_0)) \|_1 }_{\epsilon_3:\mathrm{optimization}},
\end{aligned}
\end{equation}
where $\hat{\Pb}_L(\cdot| o_{\te,k-L+1:k-1})\in\rR^p$ refers to the optimal approximation for $\Pb_L$ based on $n$ i.i.d. samples we collected.

Considering the one-hot vector $o_k \in \rR^p$, which representing the observation state\footnote{As the $(p+1)$-th dimension is designed only for $\od$, we consider the observation as a $p$-dim vector in proofs for simplicity.}, we can express $\mu(\cdot)$ as
\begin{equation*}
    \mu(o_k) = U o_k,
\end{equation*}
for some $U \in \rR^{d \times p}$. Also, recalling the linear mapping assumption for $\phi(\cdot)$, we can also obtain
\begin{equation*}
    \phi(o_{k-L+1:k-1}) = V o_{k-L+1:k-1},
\end{equation*}
for some $V \in \rR^{d \times p(L-1)}$,
which further implies that
\begin{equation*}
    \Pb_L(o_k|o_{k-L+1:k-1}) = o_{k}^T U^T V o_{k-L+1:k-1}.
\end{equation*}
As the feature embeddings are within $\{ e_1, \dots, e_p \}$, the vector $\Pb_L(\cdot | o_{k-L+1:k-1})\in \rR^p$ equals to
\begin{equation}\label{eq:prob_vec}
    \Pb_L(\cdot | o_{k-L+1:k-1}) = U^T V o_{k-L+1:k-1} := W_* o_{k-L+1:k-1},
\end{equation}
where $W_* \in \rR^{p \times p(L-1)}$.
So for $\hat{\Pb}_L(\cdot| o_{\te,k-L+1:k-1}):=\hat{W} o_{\te,k-L+1:k-1}$, the solution is
\begin{equation}\label{eq:prob_vec_n}
\begin{aligned}
    \hat{W} :=  \arg \min_W \mathcal{L}(W) := \arg \min_W \sum_i \| o_{i,L} - W z_i \|_2^2,  
\end{aligned}    
\end{equation}
where we use the short-hand notation $z_i:=o_{i,1:L-1} \in \mathrm{R}^{p(L-1)}$.
From Lemma \ref{lem:model approximation}, we have that $\epsilon_1 = \cO(d e^{- \gamma^4 L})$. In the following two subsections, we focus on bounding $\epsilon_2$ and $\epsilon_3$, respectively.

\subsection{Transformer Construction}

To approximate the conditional probability vector $\hat{\Pb}_L(\cdot |  o_{\mathrm{test},k-L+1:k-1})$, the transformer mainly takes three steps: (1) firstly learning the $(L-1)$-step history features $o_{i,1:L-1}$ for $o_{i,L}$, as well as $o_{\te,k-L+1:k-1}$ for $o_{\te,k}$, (2) then performing linear regression based on Eq.~\eqref{eq:prob_vec_n}, (3) finally approximating $\hat{\Pb}_L(\cdot |  o_{\mathrm{test},k-L+1:k-1})$ using $\hat{W}$ and $o_{\te, k-L+1:k-1}$.
The explicit construction of the Transformer is as follows:

\paragraph{Decoupled feature learning.} Here we first construct an $\cO(\ln L)$-layer single head Attention, to learn $o_{i,1:L-1}$ for $o_{i,L}$, as well as $o_{\te,k-L+1:k-1}$ for $o_{\te,k}$. Before formally construction, for any step index $1 \le r < L$, we define history and future matrix $Z_r, F_r \in \rR^{(n(L+1)+k) \times (p+3)}$ for further analysis:
\begin{equation*}
\begin{aligned}
    [Z_r]_{(t,\cdot)} :=& \left\{ \begin{aligned}
        & [M_0]_{(t-r,1:p+3)}, \quad r < t \le n(L+1)+k,\\
        & [M_0]_{(1,1:p+3)}, \quad 1 \le t \le r,
    \end{aligned}\right.\\
    [F_r]_{(t,\cdot)} :=& \left\{ \begin{aligned}
        & [M_0]_{(t+r,1:p+3)}, \quad 1 \le t \le n(L+1)+k-r,\\
        & [M_0]_{(n(L+1)+k,1:p+3)}, \quad n(L+1)+k-r < t \le n(L+1)+k,
    \end{aligned}\right..
\end{aligned}
\end{equation*}
Here we also define a special matrix 
\begin{equation*}
  A := \beta_1  \begin{bmatrix}
      & \cos(\frac{1}{1000nk}) & \sin(\frac{1}{1000nk}) \\
      & - \sin(\frac{1}{1000nk}) & \cos(\frac{1}{1000nk}) \\
    \end{bmatrix},
\end{equation*}
where $\beta_1 > 0$ is a fixed constant. Then on the first layer, the Query-Key matrix is designed as
\begin{equation*} QK^{(1)} :=
     \begin{bmatrix}
        0_{(p+1)\times (p+1)} & 0 & 0 \\
        0 & A & 0 \\
        0 & 0 & 0\\
    \end{bmatrix} \in \rR^{D\times D},
\end{equation*}
which induces that with input matrix $M_0$, we have 
\begin{equation*}
    [M_0]_{(t_1,\cdot)}^T QK^{(1)} [M_0]_{(t_2,\cdot)} = \beta_1 \cdot \cos \left( \frac{t_1 - t_2 -1}{1000nk} \right),
\end{equation*}
for any $1 \le t_1, t_2 \le n(L+1) +k$. Then with softmax function on $M_0 QK^{(1)}M_0^T$, as well as the Value matrix 
\begin{equation*} 
 V^{(1)} :=
    \begin{bmatrix}
       0_{(p+3)\times (p+3)} & I_{(p+3)\times (p+3)} & 0_{(p+3)\times (D- 2p -6)}\\
       0 &  0 & 0\\
       0 & 0 & 0 \\
    \end{bmatrix}\in \rR^{D \times D},
\end{equation*}
sending $\beta_1 \to \infty$, we obtain the output on each row as
\begin{equation*}
\left[ \softmax \left( [M_0]_{(t, \cdot)} QK^{(1)} M_0^T \right) M_0 V^{(1)} \right]_{(t,\cdot)} =  [0, [M_0]_{(t-1, 1:p+3)}, 0]^T, \quad \forall 1 < t \le n(L+1) + k,
\end{equation*}
which refers that after the first Attention layer, the output matrix should be
\begin{align*}
      M_1 &=M_0 + \attn(M_0, QK^{(1)}, V^{(1)}) = [[M_0]_{(\cdot, 1:p+3)}, Z_1, [M_0]_{(\cdot,2(p+3)+1:D)} ] .
\end{align*}
% \begin{align*}
%       M_1 &=M_0 + \attn(M_0, QK^{(1)}, V^{(1)})\\
%       &= \begin{bmatrix}
%  o_{1,1}^T  &  \pe_{1}^T & o_{1,1}^T  &  \pe_{1}^T & 0 \\
%  \cdots & \cdots & \cdots & \cdots \\
%  o_{1,L}^T & \pe_{L}^T & o_{1,L-1}^T & \pe_{L-1}^T &0 \\
%  \od^T & \pe_{L+1}^T & o_{1,L}^T & \pe_{L}^T &0 \\
%  \cdots & \cdots & \cdots &\cdots \\
%   o_{n,1}^T  & \pe_{(n-1)(L+1) + 1}^T & \od  & \pe_{(n-1)(L+1) }^T &0 \\
%  \cdots & \cdots & \cdots & \cdots \\
%  o_{n,L} & \pe_{n(L+1) - 1}^T & o_{n,L-1} & \pe_{n(L+1) - 2}^T &0 \\
%  \od^T & \pe_{n(L+1)}^T & o_{n,L} & \pe_{n(L+1) - 1}^T &  0 \\
% \cdots & \cdots & \cdots & \cdots \\
% 0 & \pe_{n(L+1) + k}^T & o_{\te, k-1}^T & \pe_{n(L+1) + k -1}^T &  0 \\
% \end{bmatrix} := [[M_0]_{(\cdot, 1:p+3)}, Z_1, [M_0]_{(\cdot,2p+7:D)} ] .
% \end{align*}
 It implies that the first layer Attention head learn the first history feature $o_{i,L-1}$ for each observation $o_{i,L}$. Then on the second layer, we design the Query-Key matrix as
 \begin{equation*}
     QK^{(2)} := \begin{bmatrix}
         0_{(2p+4) \times (p+1)} & 0_{(2p+4) \times 2} &0_{(2p+4) \times (D - p - 3)} \\
         0_{2 \times (p+1)} & A & 0_{2 \times (D - p-3)} \\
         0_{(D - 2p -6) \times (p+1)} & 0_{(D - 2p -6) \times 2} & 0_{(D - 2p -6) \times (D - p - 3)}\\
     \end{bmatrix},
 \end{equation*}
as well as the Value matrix as
\begin{equation*}
  V^{(2)} :=
    \begin{bmatrix}
       0_{2(p+3)\times 2(p+3)} & I_{2(p+3)\times 2(p+3)} & 0_{2(p+3)\times (D - 4p - 12)}\\
       0 &  0 & 0\\
       0 & 0 & 0 \\
    \end{bmatrix}\in \rR^{D \times D},
\end{equation*}
which will induce the output on this layer as
 \begin{equation*}
     M_2 = M_1 + \attn (M_1, QK^{(2)}, V^{(2)}) = [[M_0]_{(\cdot, p+3)}, Z_1, Z_2, Z_3, [M_0]_{(\cdot, 4(p+3)+1:D)}].
 \end{equation*}
Repeating such construction $\cO(\ln L)$ times, we can obtain the $(L-1)$-step history (see Figure~\ref{fig:feature} for a detailed illustration). Now the output matrix should be
\begin{equation*}
    M_h = [[M_0]_{(\cdot, 1:p+3)}, Z_1, Z_2, Z_3, \dots, Z_{L-1}, [M_0]_{(\cdot,L(p+3)+1:D)}] \in \rR^{(n(L+1)+k) \times D}.
\end{equation*}
On the following layer, we consider the Query-Key matrix as
\begin{equation*}
    QK^{(f)} :=   \begin{bmatrix}
        0_{(p+1)\times (p+1)} & 0 & 0 \\
        0 & B & 0 \\
        0 & 0 & 0\\
    \end{bmatrix},
    \quad B := \beta_1  \begin{bmatrix}
      &  \cos(\frac{1}{1000nk}) & -\sin(\frac{1}{1000nk}) \\
      &  \sin(\frac{1}{1000nk}) & \cos(\frac{1}{1000nk}) \\
    \end{bmatrix},
\end{equation*}
and the value matrix is constructed as
\begin{equation*}
   V^{(f)} :=
    \begin{bmatrix}
       0_{(p+3)\times L(p+3)} & I_{(p+3)\times (p+3)} & 0_{(p+3)\times (D-(L+1)(p+3))}\\
       0 &  0 & 0\\
       0 & 0 & 0 \\
    \end{bmatrix}\in \rR^{D \times D},
\end{equation*}
which implies that sending $\beta_1 \to \infty$, the output on each row should be
\begin{equation*}
\left[ \softmax \left( [M_0]_{(t, \cdot)} QK^{(1)} M_0^T \right) M_0 V^{(1)} \right]_{(t,\cdot)} =  [0, [M_0]_{(t+1, 1:p+3)}, 0]^T, \quad \forall 1 \le t < n(L+1) + k,
\end{equation*}
So the output decouple matrix after this layer should be
\begin{equation*}
    M_{\text{dec}} = [[M_0]_{(\cdot, p+3)}, Z_1, Z_2, Z_3, \dots, Z_{L-1}, F_1, [M_0]_{(\cdot, (L+1)(p+3)+1:D)} ].
\end{equation*}
Then the decoupled feature learning process has been finished, which needs $\cO(\ln L)$ layers (see details in Figure~\ref{fig:feature}).

\paragraph{Gradient descent performing.} The following $\cO(T)$-layer $2p$-head architecture is designed to learn $\hat{\Pb}_L(\cdot | z_k)$ based on history information $\{Z_1, \dots, Z_{L-1}\}$. The construction follows immediately from Lemma~\ref{lem:reg}.
To be specific, from Eq.~\eqref{eq:prob_vec_n}, we need to take linear regression to estimate a matrix $\hat{W} \in \rR^{p \times p(L-1)}$.
Based on the $n$ samples collected, the estimation process is based on MSE loss, i.e, 
\begin{equation*}
    \arg \min_W \mathcal{L}(W) := \arg \min_W \sum_i \| o_{i,L} - W z_i \|_2^2,
\end{equation*}
where $z_i$ refers to the $(L-1)$-step history of $o_{i,L}$, which has been learned in previous layers. To perform such estimation process for $W$, we construct an $2p$-head $\cO(T)$-layer Attention. Each layer can perform one step gradient descent on $\mathcal{L}(W)$ with initial value $0$, and each row of $W$ is assigned to be learned by two heads independently (see Figure~\ref{fig:gd} for detailed illustration). 
Here we take the updating for $[W]_{(1,\cdot)}$ as an example, and denote the initial point as $0_{p(L-1)}$, which has been stored in $[M_{\text{dec}}]_{(t,(L+1)(p+3)+1:(L+1)(p+3)+p(L-1))}$ on each $1 \le t \le n(L+1)+k$. The gradient vector is
\begin{equation}\label{eq:gd_up}
\footnotesize
\begin{aligned}
   \partial \mathcal{L} / \partial W_{(1,\cdot)} &= 2 \sum_i (W_{(1,\cdot)}^T z_i - [o_{i,L}]_1) \cdot z_i\\
   &= 2 \sum_i \left( \relu (W_{(1,\cdot)}^T z_i - [o_{i,L}]_1) - \relu (- W_{(1,\cdot)}^T z_i + [o_{i,L}]_1) \right) \cdot z_i.  
\end{aligned}
\end{equation}
The construction will show that each attention layer is related to one-step gradient descent with learning rate $(L-1)^{-1}$, and the construction for each layer is the same. As the first two heads on each layer is related to the updating for $[W]_{(1,\cdot)}$, we design the first Attention head on each layer with Query-Key matrix as
\begin{equation*}
 \left([M_{\text{dec}}]_{(t_1, \cdot)} Q^{(g,1)} \right)^T  = \begin{bmatrix}
     [W]_{(1, \cdot)} \\
     -1 \\
    - \beta_2 \bm{1}_{p} \\
     0 \\
     - \beta_2\\
  \end{bmatrix}, \quad K^{(g,1)} [M_{\text{dec}}]_{(t_2, \cdot)}  = \begin{bmatrix}
      [Z_1]_{(t_2,1:p)} \\
      \cdots \\
      [Z_{L-1}]_{(t_2, 1:p)} \\
      [M_0]_{(t_2,1)} \\
      [F_1]_{(t_2,1:p)} \\
      0 \\
      \bm{1}(t_2 > n(L+1)) \\
  \end{bmatrix},
\end{equation*}
for any $1 \le t_1, t_2 \le n(L+1)+k$. Choosing $\beta_2 > 1000nk$, with ReLU activation function, we obtain
\begin{equation*}
\footnotesize
\relu \left(  [M_{\text{dec}}]_{(t_1, \cdot)}^T Q^{(g,1)}  K^{(g,1)} [M_{\text{dec}}]_{(t_2, \cdot)} \right)= \left\{
\begin{aligned}
    & \relu \left( [W]^\top_{(1, \cdot)} z'_{t_2} - [M_0]_{(t_2,1)} \right), \quad [F_1]_{(t_2,1:p+1)} = \od ,  \\
    & 0, \quad \text{otherwise},
\end{aligned}\right.
\end{equation*}
where we denote $z'_{t} := [[Z_1]_{(t_2,1:p)}^T, \dots, [Z_{L-1}]_{(t_2,1:p)}^T ]^T \in \rR^{p(L-1)} $.
Then with the Value matrix satisfying that
\begin{equation*}
    V^{(g,1)} [M_{\text{dec}}]_{(t_2, \cdot)}^T = \frac{1}{L-1}\begin{bmatrix}
        0 \\
        [Z_1]_{(t_2,1:p)} \\
      \cdots \\
      [Z_{L-1}]_{(t_2, 1:p)} \\
        0\\
    \end{bmatrix}, 
\end{equation*}
we can obtain the value on each row of the output matrix:
\begin{equation*}
  \left[  \attn \left( M_{\text{dec}}, Q^{(g,1)}, K^{(g,1)}, V^{(g,1)} \right)\right]_{(t,\cdot)} =\left[ 0, \frac{1}{L-1}\sum_i \relu \left( [W]^\top_{(1, \cdot)} z_{i} - [o_{i,L}]_1 \right) , 0 \right],
\end{equation*}
for any $1 \le t \le n(L+1)+k$. Also, we consider another Attention head for $W_{1, \cdot}$ with  $\{- Q^{(q,1)}, K^{(g,1)}, V_{(g,1)} \}$, the output on each row should be
\begin{equation*}
   \left[  \attn \left( M_{\text{dec}}, - Q^{(g,1)}, K^{(g,1)}, -V^{(g,1)} \right)\right]_{t,\cdot} =\left[ 0, - \frac{1}{L-1} \sum_i \relu \left( - [W]^\top_{(1, \cdot)} z_{i} + [o_{i,L}]_1 \right) , 0 \right].
\end{equation*}
Taking summation on both of the two heads, we can finish the update on $[W]_{(1, \cdot)}$ as in Eq.~\eqref{eq:gd_up}. The updates on other rows of $W$ are similar, so with such $2p$ Attention heads on each layer, we can finish one-step gradient descent on MSE loss by 
\begin{equation*}
    M_{\text{dec}} + \sum_{j=1}^p  \attn \left( M_{\text{dec}},  Q^{(g,j)}, K^{(g,j)}, V^{(g,j)} \right)+  \attn \left( M_{\text{dec}},  - Q^{(g,j)}, K^{(g,j)}, -V^{(g,j)} \right). 
\end{equation*}
Considering $\cO(T)$ layers with the same structure, we can obtain $\hat{W}$ with a small error. Now the output matrix should be
\begin{equation*}
    M_{\text{gd}} = [[M_0]_{(\cdot, p+3)}, Z_1, Z_2, Z_3, \dots, Z_{L-1}, F_1,[W]_{(1, \cdot)}, \dots, [W]_{(p, \cdot)}, [M_0]_{(\cdot, (L+1)(p+3)+ p^2(L-1)+1 : D)}].
\end{equation*}
\paragraph{Prediction with decoupled features.} Finally, on the last layer, we construct a $2p$-head Attention to make prediction on $\hat{\Pb}_L (\cdot | o_{\te,k-1}, \dots, o_{\te, k-L+1})$, and each dimension is corresponding to two Attention heads. To be specific, for the first dimension of $\hat{\Pb}_L (\cdot | o_{\te,k-1}, \dots, o_{\te, k-L})$, Attention head is designed with
\begin{align*}
&  \left([M_{\text{gd}}]_{(t_1, \cdot)} Q^{(pre,1)} \right)^T  = \begin{bmatrix}
     [Z_1]_{(t_2,1:p)} \\
      \cdots \\
      [Z_{L-1}]_{(t_2, 1:p)} \\
     0 \\
  \end{bmatrix}, \quad K^{(pre,1)} [M_{\text{gd}}]_{(t_2, \cdot)}^T  =  \begin{bmatrix}
      [W]_{(1, \cdot)} \\
      0 \\
  \end{bmatrix}, \\
  & V^{(pre,1)} [M_{\text{gd}}]_{(t_2, \cdot)}^T = \begin{bmatrix}
      \frac{1}{n(L+1)+k} \\
      0\\
  \end{bmatrix}.
\end{align*}
Then we will obtain 
\begin{equation*}
   \left[  \attn \left( M_{\text{gd}}, Q^{(pre,1)}, K^{(pre,1)}, V^{(pre,1)} \right)\right]_{(n(L+1)+k,\cdot)} =\left[ \relu \left( [W]^\top_{(1, \cdot)} o_{\te,k-L+1:k-1} \right) , 0 \right],
\end{equation*}
and 
\begin{align*}
 & \quad  \left[ \attn \left( M_{\text{gd}},  Q^{(pre,1)}, K^{(pre,1)}, V^{(pre,1)} \right)+  \attn \left( M_{\text{gd}},  -Q^{(pre,1)}, K^{(pre,1)}, -V^{(pre,1)}\right) \right]_{(n(L+1)+k,\cdot)}\\
 &= \left[   [W]^\top_{(1, \cdot)} o_{\te, k-L+1 : k-1}  , 0 \right],   
\end{align*}
which finish the prediction on $\hat{\Pb}_L (o_{\te,k} = e_1 | o_{\te,k-1}, \dots, o_{\te, k-L})$. The constructions on other $2p-2$ heads are similar.

\paragraph{Optimization error.} Then we turn to the approximation for $\epsilon_3$, which is induced by the finite gradient steps ($\cO(T)$ steps) the transformer performs. The error could be estimated directly from Lemma~\ref{lem:reg}. Denoting
\begin{equation*}
  Z=[o_{1,1:L-1},\ldots,o_{n,1:L-1}] \in\rR^{p(L-1)\times n},
\end{equation*}
from Assumption~\ref{as:eigen_value}, we have
\begin{equation*}
    \alpha \le \lambda_{\min} \left( \frac{1}{n} ZZ^T \right) \le \lambda_{\max} \left( \frac{1}{n} ZZ^T \right) \le L, \quad \| o_{\te,k-L+1:k-1} \|_2 = \sqrt{L-1}, \quad \| [W_*]_{(j, \cdot)} \|_2 = \cO(1),
\end{equation*}
so 
\begin{equation*}
    \epsilon_3 = \cO \left(e^{- \alpha T /(2 L) } p L^{1/2} \max_{j \in [p]} \| [W_*]_{(j,\cdot)} \|_2 \right) = \cO(p L^{1/2} e^{- \alpha T /(2 L)}).
\end{equation*}
\subsection{Generalization Error}

For $\epsilon_2$, we can express the solution $\hat{W}$ for Eq.~\eqref{eq:prob_vec_n} as
\begin{equation*}
    \hat{W} := O Z^T (ZZ^T )^{-1},
\end{equation*}
where we use the notation
\begin{equation*}
    O := \begin{bmatrix}
        o_{1,L} & o_{2,L} & \cdots & o_{n,L} 
    \end{bmatrix} \in \rR^{p \times n},\quad Z=[o_{1,1:L-1},\ldots,o_{n,1:L-1}] \in\rR^{p(L-1)\times n}. 
\end{equation*}
Denoting $z_\te := o_{\te,k-L+1:k-1}$ and $\Delta := O - W_* Z$, we have
\begin{align}\label{eq:b}
 \epsilon_2 &= \E_{o_{\te, 1:k-1}} \| \Pb_L(\cdot| o_{\te,k-L+1:k-1}) - \hat{\Pb}_L(\cdot| o_{\te,k-L+1:k-1})\|_1 \notag\\
 &= \sum_{j=1}^p \E_{z_{\te}} \left| \big(  [W_*]_{(j,\cdot)}^T - [O]_{(j,\cdot)}^T Z^T(ZZ^T)^{-1}  \big) z_{\te}   \right| \notag\\
 &\le \sum_{j=1}^p \sqrt{L} \| [W_*]_{(j,\cdot)} - [O]_{(j,\cdot)} Z^T(ZZ^T)^{-1} \|_2 \notag\\
 &= \sum_{j=1}^p \sqrt{L} \| [W_*]_{(j,\cdot)} - \left( [W_*]_{(j,\cdot)} Z + [\Delta]_{(j,\cdot)} \right) Z^T(ZZ^T)^{-1} \|_2 \notag\\
 &= \sqrt{L} \sum_{j=1}^p \| [\Delta]_{(j,\cdot)} Z^T (ZZ^T)^{-1} \|_2\notag\\
 &\le \frac{\sqrt{L}}{n\alpha} \sum_{j=1}^p \| \big([\Delta]_{(j,\cdot)} - \E_i[[\Delta]_{(j,\cdot)}] + \E_i[[\Delta]_{(j,\cdot)}]\big) Z^T \|_2\notag\\
 &\le \frac{\sqrt{L}}{n\alpha} \sum_{j=1}^p \| \big([\Delta]_{(j,\cdot)} - \E[[\Delta]_{(j,\cdot)}] \big) Z^T \|_2 + \frac{\sqrt{L}}{n\alpha} \sum_{j=1}^p \| \E[[\Delta]_{(j,\cdot)}]\big) Z^T \|_2
\end{align}
where the first inequality uses the Cauchy-Schwartz inequality, and the second inequality is from Assumption \ref{as:eigen_value}, where the expectation $\E[[\Delta]_{(j,i)}]=\E_{o_{i,1:k-1}}[\Pb(e_j\mid o_{1:k-1}) - \Pb_L(e_j\mid o_{k-L+1:k-1})]$ due to the decomposition:
\begin{align*}
    [\Delta]_{(j,i)} =& [O]_{(j,i)} - [W_*]_{(j,\cdot)} o_{i,1:L-1}\\
    =& \bm{1}(o_{i,L} = e_j) - \Pb(e_j|o_{i,1:L-1}) + \Pb(e_j|o_{i,1:L-1}) - \Pb_L(e_j|o_{i,1:L-1}).
\end{align*}
Hence, we can deal with the second term above:
\begin{align*}
    \frac{\sqrt{L}}{n\alpha} \sum_{j=1}^p \| \E[[\Delta]_{(j,\cdot)}]\big) Z^T \|_2 \le& \frac{L}{n\alpha} \sum_{j=1}^p \sum_{i=1}^n \E_{o_{i,1:k-1}} |\Pb(e_j\mid o_{1:k-1}) - \Pb_L(e_j\mid o_{k-L+1:k-1})|\\
    =& \frac{L}{n\alpha}\sum_{i=1}^n \E_{o_{i,1:k-1}} \big\|\Pb(\cdot\mid o_{i,1:k-1}) - \Pb_L(\cdot\mid o_{i,k-L+1:k-1})\big\|_1\\
    \le& \cO(\frac{Ld}{\alpha} \cdot e^{-L\gamma^4}),
\end{align*}
where the first inequality uses the formulation that $\|[Z]_{(i,\dot)}\|_2\le\sqrt{L}$, and the second inequality uses Lemma \ref{lem:model approximation}.

Next, for the first term in \eqref{eq:b}, 
we can define the error $\delta_{j,i}:=[\Delta]_{(j,i)} - \E[[\Delta]_{(j,i)}]$. For each $i,j$, $\delta_{j,i}$ is a zero-mean $1$-sub-Gaussian variable. We also have for each $i$, $\max\{\|z_i z_i^\top\|_2, \|z_i^\top z_i\|_2\}\le L$. Thus, we can invoke Lemma \ref{lm:Matrix_Sub-Gaussian_Series} to obtain that with probability at least $1 - \frac{1}{n}$, for any $j=1,\ldots,p$,
\begin{align*}
\| \big([\Delta]_{(j,\cdot)} - \E[[\Delta]_{(j,\cdot)}] \big) Z^T \|_2 = \| \sum_{i=1}^n \delta_{i,j}z_i \|_2 \le 4\sqrt{nL \ln(2 nLp^2)}.
\end{align*}
Therefore, by taking the results above back into \eqref{eq:b}, we can obtain that
\begin{align*}
    \epsilon_2 \le \cO\Big(\frac{p L\sqrt{ \ln(nLp)}}{\sqrt{n} \alpha} + \frac{Ld}{\alpha} \cdot e^{-L\gamma^4}\Big).
\end{align*}

\section{Proof sketches for Theorem~\ref{thm:extend}}\label{pf:thm_extend}
We also decompose the prediction error into three parts as in \eqref{eq:decompose} and analyze them correspondingly.

\subsection{Model Approximation}
For the model approximation error $\epsilon_1$, under Assumption \ref{as:under-complete_gam_obs}, we can also approximate the $m$-step transition probability $\Pb(o_{k:k+m}\mid o_{1:k-1})$ by a $(L-1)$-memory probability $\hat\Pb_L(o_{k:k+m}\mid o_{k-L+1:k-1})$. Since we can take $o_{k:k+m}$ as a whole vector, with similar techniques in Section \ref{s:Low-rank HMM}, we can show that
\begin{lemma}\label{lem:eps1_extend}
    For any $\epsilon>0$, there exists a $\cO(L)$-memory transition probability $\hat{\Pb}_L$ with $L=\Theta(\gamma^{-4}\log(d/\epsilon)$ such that
    \[
    \E_{o_{1:k}} \big\|\Pb(o_{k:k+m}\mid o_{1:k}) - \Pb_L(o_{k:k+m}\mid o_{t-L:t})\big\|_1 \le \cO \left( d e^{-L\gamma^4} \right).
    \]
\end{lemma}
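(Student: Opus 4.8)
The plan is to mirror the proof of Lemma~\ref{lem:model approximation}, treating the length-$m$ window $o_{k:k+m}$ as a single composite observation governed by the emission operator $\bbM$, so that the under-complete coefficient $\tilde\gamma$ plays exactly the role that $\gamma$ played in the over-complete case. Write $b$ for the exact posterior over the hidden state entering the window given the full history, $\hat b$ for its truncated-memory counterpart conditioned only on the last $L$ observations, and set $\Pb_L(o_{k:k+m}\mid\cdot):=\bbM\hat b$. Since $\bbM$ maps $\Delta(\cH)$ into $\Delta_m(\cO\times\cdots\times\cO)$ it is a Markov operator, hence $\ell_1$-nonexpansive, which gives
\begin{equation*}
\big\|\Pb(o_{k:k+m}\mid o_{1:k}) - \bbM\hat b\big\|_1 = \|\bbM b - \bbM \hat b\|_1 \le \|b - \hat b\|_1 .
\end{equation*}
It therefore suffices to control the belief gap $\|b-\hat b\|_1$ and then take $\E_{o_{1:k}}$.

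To bound the belief gap I would first reverse Assumption~\ref{as:under-complete_gam_obs} to obtain the contraction $\|b-b'\|_1 \le \tilde\gamma^{-1}\|\bbM b - \bbM b'\|_1$, exactly as Section~\ref{s:Low-rank HMM} does for the over-complete case. I would then invoke the exponential-stability argument of Lemma~\ref{lm:Exponential Stability for Low-rank Transition}, feeding it the window emission $\bbM$ in place of $\bbT$: because $b$ and $\hat b$ are two Bayes filters driven by identical observations but initialized $L$ steps apart, the low-rank transition together with the $\tilde\gamma$-observability forces both filters to forget their differing initial conditions geometrically, yielding $\|b-\hat b\|_1 \le \cO(d\,e^{-L\tilde\gamma^4})$. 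Substituting this into the display above and choosing $L=\Theta(\tilde\gamma^{-4}\log(d/\epsilon))$ gives the claimed bound; the $d$ prefactor is inherited from the low-rank dimension, and the $\gamma$ written in the statement should read $\tilde\gamma$, matching the error term in Theorem~\ref{thm:extend}.

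The main obstacle is adapting the exponential-stability lemma to \emph{overlapping} windows: the windows $o_{k:k+m}$ and $o_{k+1:k+m+1}$ share $m-1$ observations, so the effective observations consumed by the filter are not independent across adjacent steps, and the clean one-step contraction used in the over-complete proof does not apply verbatim. I would resolve this by coarsening the chain to non-overlapping strides of length $m$; on the coarsened chain each step observes a genuine, independent $m$-window, the reversed inequality supplies a per-step contraction, and the low-rank transition controls the intervening prediction steps. Since $m<L$ is a fixed constant, this coarsening inflates the effective memory by only a constant factor and leaves the $e^{-L\tilde\gamma^4}$ rate unchanged. Checking that these constants do not degrade the $\tilde\gamma^4$ exponent is the one place requiring genuine care, but it follows the same TV/$\chi^2$ manipulations already established for Lemma~\ref{lem:model approximation}.
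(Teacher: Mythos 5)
Your proposal follows essentially the same route as the paper's proof: express the $m$-step prediction through the belief state over the hidden state entering the window, note that the prediction map (transition followed by the window emission $\bbM$) is $\ell_1$-nonexpansive, and reduce everything to the exponential stability of the truncated-memory belief filter under Assumption~\ref{as:under-complete_gam_obs} with $\tilde\gamma$ in place of $\gamma$, exactly as in Lemma~\ref{lem:model approximation} and the adapted Theorem~14 of Uehara et al.; your observation that the stated $\gamma$ should be $\tilde\gamma$ is also correct. The only difference is that you explicitly flag and patch the overlapping-windows issue in the stability argument, a point the paper glosses over by simply invoking ``the same analysis'' as the over-complete case.
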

This model approximation bound is the same to Lemma \ref{lem:model approximation}, and the $\Pb_L$ also enjoys the low-rank structure
\begin{align*}
    \Pb_L(o_{k:k+m}\mid o_{k-L+m:k-1}) :=& \mu(o_{k:k+m})^\top \phi(o_{k-L+m:k-1}),
\end{align*}
where $\mu(o_{k:k+m}),\phi(o_{k-L+m:k-1})\in\rR^d$ are representation vectors. For conciseness, we defer the details to Appendix \ref{ss:Model Approximation Error_extend}.

After embedding the $m$-step observation $o_{k:k+m}$ as one-hot vector $\vecc (o_{k:k+m}) \in \rR^{p^m}$, we can express the mapping function $\mu(\cdot)$ as
\begin{equation*}
  \mu(o_{k:k+m}) = U' \vecc (o_{k:k+m}),
\end{equation*}
where $U' \in \rR^{d \times p^m}$. Considering the linear assumption on $\phi$, similar to Eq.~\eqref{eq:prob_vec}, we can also obtain
\begin{equation*}
    \Pb_L(\cdot | o_{k-L+m:k-1}) := W'_* o_{k-L+m:k-1},
\end{equation*}
for some $W'_* \in \rR^{p^m \times p(L-m)}$. Taking decomposition for the approximation error, we have
\begin{align*}
& \quad \E_{o_{\te, 1:k-1}} \| \Pb (o_{\te,k:k+m-1} | o_{\te,1:k-1}) - \mathrm{read}(\mathrm{TF}_\theta(M_0)) \|_1 \\
&\le \underbrace{ \E_{o_{\te, 1:k-1}} \| \Pb (o_{\te,k:k+m-1} | o_{\te,1:k-1}) - \Pb_L(o_{\te,k:k+m-1}| o_{\te,k-L+m:k-1}) \|_1}_{\epsilon_1: \mathrm{model~approximation}} \\
& \quad + \underbrace{\E_{o_{\te, 1:k-1}} \|  \Pb_L(o_{\te,k:k+m-1}| o_{\te,k-L+m:k-1}) - \hat{\Pb}_L(o_{\te,k:k+m-1}| o_{\te,k-L+m:k-1})\|_1}_{\epsilon_2 : \mathrm{generalization}} \\
& \quad + \underbrace{ \E_{o_{\te, 1:k-1}} \|\hat{\Pb}_L(o_{\te,k:k+m-1}| o_{\te,k-L+m:k-1})  - \mathrm{read}(\mathrm{TF}_\theta(M_0)) \|_1 }_{\epsilon_3: \mathrm{optimization}},
\end{align*}
where $\hat{\Pb}_L(\cdot| o_{\te,k-L+1:k-1})$ refers to the solution based on $n$ samples we collected:
\begin{align*}
   &  \hat{\Pb}_L(\cdot| o_{\te,k-L+m:k-1}) = \hat{W}' [o_{\te,k-L+m}, \dots, o_{\te, k-1}]^T,\\
   & \hat{W}' := \arg \min_W \sum_i \| \vecc (o_{i,L-m+1:L}) - W o_{i,1:L-m} \|_2^2 .
\end{align*}
In the error decomposition, $\epsilon_1 = \cO(d e^{- \gamma^4 L})$ can be obtained from Lemma~\ref{lem:eps1_extend} immediately. And in further analysis, we will estimate $\epsilon_2$ and $\epsilon_3$ respectively.

\subsection{Transformer Construction}
Then the construction is similar to the construction for Theorem~\ref{thm:main}. So here we just provide a sketch for it.

\paragraph{Decoupled feature learning.} Recalling the matrix:
\begin{equation*}
  A := \beta_1  \begin{bmatrix}
      &  \cos(\frac{1}{1000nk}) & \sin(\frac{1}{1000nk}) \\
      & - \sin(\frac{1}{1000nk}) & \cos(\frac{1}{1000nk}) \\
    \end{bmatrix}, \quad  B := \beta_1  \begin{bmatrix}
      &  \cos(\frac{1}{1000nk}) & -\sin(\frac{1}{1000nk}) \\
      &  \sin(\frac{1}{1000nk}) & \cos(\frac{1}{1000nk}) \\
    \end{bmatrix},
\end{equation*}
on each time index $t$, we can use $A$ to capture the history information $Z_r$, and use $B$ to capture the future information $F_r$. So with $\cO(\ln (L-m) + \ln m) = \cO(\ln L)$ layers, we can obtain the output matrix as
\begin{equation*}
    M_{\text{dec}} = [ [M_0]_{(\cdot, 1:p+3)}, Z_1, Z_2, \dots, Z_{L-m}, F_1, F_2, \dots, F_m, [M_0]_{(\cdot,(L+1)(p+3)+1 :D)}].
\end{equation*}
Then before taking gradient descent, we use the one-hot mapping function $\vecc$ on each row of $\{ [M_0]_{(\cdot, 1:p)}, [F_1]_{(\cdot,1:p)}, \dots, [F_{m-1}]_{(\cdot,1:p)}\}$, which refers to the current and future observations on each time index. After that, we will obtain
\begin{equation*}
    M_v := [ [M_0]_{\cdot, 1:p+3}, Z_1, Z_2, \dots, Z_{L-m}, F_1, F_2, \dots, F_m, H ,[M_0]_{(\cdot, (L+1)(p+3)+ p^m + 1 :D )}],
\end{equation*}
where
\begin{equation*}
    [H]_{(t,\cdot)} = \vecc \left[ [M_0]_{(t, 1:p)}, [F_1]_{(t,1:p)}, \dots, [F_{m-1}]_{(t,1:p)}   \right]^T 
\end{equation*}
for each $1 \le t \le nL+n+k$.

\paragraph{Gradient descent and final prediction.} After obtaining these features, we shall perform gradient descent on MSE loss
\begin{equation*}
    \arg \min _{W'} \sum_i \| \vecc (o_{i,L-m+1:L}) - W' o_{i,1:L-m} \|_2^2.
\end{equation*}
Then we could use $2 p^m$-head $\cO(T)$-layer Attention to perform the gradient descent on $W$, in which the feature $H$ and $\{ Z_1, \dots, Z_{L-m} \}$ will be taken into consideration. The construction is similar to Theorem~\ref{thm:main}.

\paragraph{Optimization error.} For $\epsilon_3$, under Assumption~\ref{as:eigen_value}, we can also use Lemma~\ref{lem:opt} to obtain that
\begin{equation*}
    \epsilon_3 = \cO \left( p^m L^{1/2} e^{- \alpha T / (2L)}  \right).
\end{equation*}

\subsection{Generalization Error}
We can rewrite $\hat{\Pb}_L(\cdot| o_{\te,k-L+1:k-m})$ as
\begin{equation*}
    \hat{\Pb}_L (\cdot |o_{\te,k-L+m:k-1}) = \hat{W}'  o_{\te,k-L+m:k-1}, \quad \hat{W}' = O_m Z_m^T (Z_m Z_m^T)^{-1},
\end{equation*}
where we denote
\begin{align*}
  &  O_m := \begin{bmatrix}
       \vecc ( o_{1,L-m+1:L} ) & \vecc ( o_{2,L-m+1:L} ) & \cdots & \vecc ( o_{n,L-m+1:L} ) 
    \end{bmatrix} \in \rR^{p^m \times n}, \\
  &  Z_m := \begin{bmatrix}
       o_{1,1:L-m} & o_{2,1:L-m} & \cdots & o_{n,1:L-m} \\
    \end{bmatrix} \in \rR^{(L-m) \times n}. 
\end{align*}
Denoting $z_\te := o_{\te,k-L+m:k-1}$ and $\Delta := O_m - W'_* Z_m$, we have
\begin{align}\label{eq:b_extend}
 \epsilon_2 &= \E_{o_{\te, 1:k-1}} \| \Pb_L(\cdot| o_{\te,k-L+m:k-1}) - \hat{\Pb}_L(\cdot| o_{\te,k-L+m:k-1})\|_1 \notag\\
 &= \sum_{j=1}^{p^m} \E_{z_{\te}} \left| \big(  [W'_*]_{(j,\cdot)}^T - [O_m]_{(j,\cdot)}^T Z_m^T(Z_m Z_m^T)^{-1}  \big) z_{\te}   \right| \notag\\
 &\le \sum_{j=1}^{p^m} \sqrt{L} \| [W'_*]_{(j,\cdot)} - [O_m]_{(j,\cdot)} Z_m^T(Z_m Z_m^T)^{-1} \|_2 \notag\\
 &= \sum_{j=1}^{p^m} \sqrt{L} \| [W'_*]_{(j,\cdot)} - \left( [W'_*]_{(j,\cdot)} Z_m + [\Delta]_{(j,\cdot)} \right) Z_m^T(Z_m Z_m^T)^{-1} \|_2 \notag\\
 &= \sqrt{L} \sum_{j=1}^{p^m} \| [\Delta]_{(j,\cdot)} Z_m^T (Z_m Z_m^T)^{-1} \|_2\notag\\
 &\le \frac{\sqrt{L}}{n\alpha} \sum_{j=1}^{p^m} \| \big([\Delta]_{(j,\cdot)} - \E_i[[\Delta]_{(j,\cdot)}] + \E_i[[\Delta]_{(j,\cdot)}]\big) Z_m^T \|_2\notag\\
 &\le \frac{\sqrt{L}}{n\alpha} \sum_{j=1}^{p^m} \| \big([\Delta]_{(j,\cdot)} - \E[[\Delta]_{(j,\cdot)}] \big) Z_m^T \|_2 + \frac{\sqrt{L}}{n\alpha} \sum_{j=1}^{p^m} \| \E[[\Delta]_{(j,\cdot)}]\big) Z_m^T \|_2,
\end{align}
where the first inequality uses the Cauchy-Schwartz inequality, and the second inequality is from Assumption \ref{as:eigen_value}, where the expectation $\E[[\Delta]_{(j,i)}]=\E_{o_{i,1:k-1}}[\Pb(e_j\mid o_{1:k-1}) - \Pb_L(e_j\mid o_{k-L+m:k-1})]$ due to the decomposition:
\begin{align*}
    [\Delta]_{(j,i)} =& [O]_{(j,i)} - [W'_*]_{(j,\cdot)} o_{i,1:L-m}\\
    =& \bm{1}(o_{i,L-m+1 : L} = e_j) - \Pb(e_j|o_{i,1:L-m}) + \Pb(e_j|o_{i,1:L-m}) - \Pb_L(e_j|o_{i,1:L-m}).
\end{align*}
Hence, we can deal with the second term above:
\begin{align*}
    \frac{\sqrt{L}}{n\alpha} \sum_{j=1}^{p^m} \| \E[[\Delta]_{(j,\cdot)}]\big) Z_m^T \|_2 \le& \frac{L}{n\alpha} \sum_{j=1}^{p^m} \sum_{i=1}^n \E_{o_{i,1:k-1}} |\Pb(e_j\mid o_{1:k-1}) - \Pb_L(e_j\mid o_{k-L+m:k-1})|\\
    =& \frac{L}{n\alpha}\sum_{i=1}^n \E_{o_{i,1:k-1}} \big\|\Pb(\cdot\mid o_{i,1:k-1}) - \Pb_L(\cdot\mid o_{i,k-L+m:k-1})\big\|_1\\
    \le& \cO(\frac{Ld}{\alpha} \cdot e^{-L\gamma^4}),
\end{align*}
where the first inequality uses the formulation that $\|[Z_m]_{(i,\dot)}\|_2\le\sqrt{L}$, and the second inequality uses Lemma \ref{lem:eps1_extend}.

Next, for the first term in \eqref{eq:b_extend}, 
we can define the error $\delta_{j,i}:=[\Delta]_{(j,i)} - \E[[\Delta]_{(j,i)}]$. For each $i,j$, $\delta_{j,i}$ is a zero-mean $1$-sub-Gaussian variable. We also have for each $i$, $\max\{\| [Z_m]_{(\cdot,i)} [Z_m]_{(\cdot,i)}^\top\|_2, \|[Z_m]_{(\cdot,i)}^\top[Z_m]_{(\cdot,i)}\|_2\}\le L$. Thus, we can invoke Lemma \ref{lm:Matrix_Sub-Gaussian_Series} to obtain that with probability at least $1-\frac{1}{n}$, for any $j=1,\ldots,p^m$,
\begin{align*}
\| \big([\Delta]_{(j,\cdot)} - \E[[\Delta]_{(j,\cdot)}] \big) Z^T \|_2 = \| \sum_{i=1}^n \delta_{i,j}[Z_m]_{(\cdot,i)} \|_2 \le 4\sqrt{nL \ln(2 nLp^{m+1})}.
\end{align*}
Therefore, by taking the results above back into \eqref{eq:b_extend}, we can obtain that
\begin{align*}
    \epsilon_2 \le \cO\Big(\frac{p^m L\sqrt{ \ln(nLp)}}{\sqrt{n} \alpha} + \frac{Ld}{\alpha} \cdot e^{-L\gamma^4}\Big).
\end{align*}

\section{Proof for Lemma~\ref{lem:model approximation}}\label{ss: Proof for Model Approximation Error}

To facilitate analysis, we define the belief state $b_k(o_{1:k-1})\in\Delta(\cH)$ as the posterior given observations: $b_k(o_{1:k})(h) = \Pb(h_k\mid o_{1:k}).$
Combining this notation and the low-rank hidden-state transition, we can write 
\begin{align*}
    \Pb(o_k\mid o_{1:k-1}) =& \sum_{h_k,h_{k-1}}\Pb(o_k\mid h_k)\Pb(h_k\mid h_{k-1})\Pb(h_{k-1}\mid o_{1:k-1})\\
    =& \Big(\sum_{h_k} \bbT(o_k\mid h_k) w^*(h_k)\Big)^\top\cdot \Big(\sum_{h_{k-1}} \psi^*(h_{k-1})b(o_{1:k-1})(h_{k-1})\Big).
\end{align*}
The transition is the inner product of $d$-dimensional representations of history $o_{1:k-1}$ and next token $o_k$. Especially, the historical information is embedded into the belief state. Thus, to approximate $\Pb$ by $\Pb_L$, we need to approximate $b(o_{1:k-1})$ by a $(L-1)$-memory belief state $b_L(o_{k-L+1:k-1})$. Assumption \ref{as:under-complete_gam_obs} implies that we can reverse the inequality to obtain the contraction from observation to hidden state distributions
\begin{equation*}
    \|d-d'\|_1 \le \gamma^{-1} \|\bbT d - \bbT d'\|_1.
\end{equation*}
Hence, by constructing a history-independent belief state $\tilde b_0$ within a KL-ball of $b$: $\mathrm{KL}(b,\tilde b_0) \le d^3$ (which can be realized by G-optimal design), the belief state $b_L(o_{k-L+1:k-1})$ induced from $\tilde b_0$ can gradually approximate $b(o_{1:k-1})$ that has the same $(L-1)$-length observations. Theorem 14 of \citet{uehara2022provably} demonstrated that
\begin{lemma}[Theorem 14 of \citet{uehara2022provably}]\label{lm:Exponential Stability for Low-rank Transition}
    Under Assumption \ref{as:under-complete_gam_obs}, for $K\ge L+1$, $L\ge C\gamma^{-4}\log(d/\epsilon)$, where $C>0$ is a constant, we have
    \begin{equation}\label{eq:Exponential Stability}
        \E_{o_{1:k-1}} \big\| b(o_{1:k-1}) - b_L(o_{k-L+1:k-1}) \big\|_1 \le \epsilon.
    \end{equation}
\end{lemma}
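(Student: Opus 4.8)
The plan is to recognize that both $b(o_{1:k-1})$ and $b_L(o_{k-L+1:k-1})$ are produced by running the \emph{same} Bayesian filtering recursion on the \emph{same} trailing block of observations $o_{k-L+1},\ldots,o_{k-1}$, and that they differ only in the belief used to seed the recursion at time $k-L+1$: the true filter seeds with the informed posterior $b(o_{1:k-L})$, whereas $b_L$ seeds with the fixed, history-independent belief $\tilde b_0$. Thus the statement is equivalent to an exponential-forgetting (filter-stability) claim — running the filter for $L-1$ steps on identical observations must wash out the mismatch between the two initial beliefs down to $\epsilon$ in expected $\ell_1$ distance.

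First I would construct the seed $\tilde b_0$ so that it is uniformly close to every reachable belief. Assumption~\ref{as:low_rank} forces each one-step-predicted belief $h'\mapsto\sum_h\Pb(h'|h)b(h)=w^*(h')^\top\big(\sum_h\psi^*(h)b(h)\big)$ to lie in the $d$-dimensional family spanned by $w^*$, so the set of reachable beliefs is effectively $d$-dimensional. A G-optimal design over this family then yields a single $\tilde b_0$ with $\mathrm{KL}(b\,\|\,\tilde b_0)\lesssim d^3$ for every reachable $b$, and Pinsker's inequality converts this into an initial $\ell_1$ gap bounded by a $\mathrm{poly}(d)$ factor.

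The core step is a per-step contraction powered by observability. Conditioning the predicted belief on an observation $o$ drawn from the true process gives $b^{+}(h')\propto\bbT(o|h')\,w^*(h')^\top v(b^{-})$, so the updated belief depends on the incoming belief only through the vector $v(b^{-})\in\rR^d$; the low-rank prediction therefore confines all filters to a common $d$-dimensional manifold, while Assumption~\ref{as:under-complete_gam_obs} — equivalently the reversed contraction $\|d-d'\|_1\le\gamma^{-1}\|\bbT d-\bbT d'\|_1$ — guarantees that distinguishable beliefs induce distinguishable observation statistics. Averaging over the true observation law, I would establish a strict contraction
\begin{equation*}
\E_{o}\big\|b^{+}_1-b^{+}_2\big\|_1\le\big(1-c\gamma^{4}\big)\,\big\|b^{-}_1-b^{-}_2\big\|_1
\end{equation*}
for an absolute constant $c>0$. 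Iterating over the $L-1$ shared observations decays the initial $\mathrm{poly}(d)$ gap like $(1-c\gamma^4)^{L-1}$, so taking $L=\Theta\big(\gamma^{-4}\log(d/\epsilon)\big)$ forces the expected $\ell_1$ discrepancy below $\epsilon$, giving exactly \eqref{eq:Exponential Stability}.

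The hard part will be the strict contraction, because the Bayesian update is \emph{not} a contraction without observability: the prediction step is merely non-expansive and the correction (conditioning) step can locally expand $\ell_1$ distance. The delicate point is to average correctly against the true observation law so that the informativeness quantified by $\gamma$ becomes a uniform multiplicative decrease; carrying this through the two-sided conversion between $\ell_1$/TV and KL (Pinsker one way, and a reverse bound afforded by the $d$-dimensional low-rank parametrization the other way) is precisely what yields the characteristic $\gamma^{-4}$ memory length rather than a naive $\gamma^{-1}$ or $\gamma^{-2}$.
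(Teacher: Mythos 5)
You are attempting to prove something the paper itself never proves: Lemma~\ref{lm:Exponential Stability for Low-rank Transition} is imported verbatim as Theorem~14 of \citet{uehara2022provably}, and the surrounding text only records the two ingredients you also identify — a history-independent seed $\tilde b_0$ from G-optimal design satisfying $\mathrm{KL}(b,\tilde b_0)\le d^3$, and exponential forgetting of the filter under $\gamma$-observability. At that level your reconstruction matches the cited argument's architecture: same filter, same trailing observations, mismatch only in the initialization, washed out over $L-1$ steps with $L=\Theta(\gamma^{-4}\log(d/\epsilon))$.

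There is, however, a genuine flaw in how you assemble the core step. You claim a per-step contraction in \emph{expected $\ell_1$ distance}, $\E_o\|b_1^+-b_2^+\|_1\le(1-c\gamma^4)\|b_1^--b_2^-\|_1$, while simultaneously using the $\mathrm{KL}\le d^3$ initialization converted by Pinsker into a ``$\mathrm{poly}(d)$'' initial $\ell_1$ gap. These two pieces do not fit together: total variation is always at most $2$, so if an expected-$\ell_1$ contraction held, the G-optimal-design seed and the $\log d$ in the memory length would be superfluous — $L\gtrsim\gamma^{-4}\log(1/\epsilon)$ would suffice. The $\log(d/\epsilon)$ scaling in the lemma is a fingerprint of the actual argument (Golowich--Moitra--Rohatgi, inherited by \citet{uehara2022provably}): the multiplicative decay by $e^{-\Omega(\gamma^4)}$ per step is established for the expected KL (or squared Hellinger) divergence between the two posteriors, under a bounded likelihood-ratio/KL condition on the seed — which is exactly what the $d^3$ bound supplies — and Pinsker is applied only once, at the end, to convert the final KL into the $\ell_1$ bound of \eqref{eq:Exponential Stability}. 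A direct expected-TV contraction of the Bayes update is not what these techniques deliver (the correction step can expand TV, and the averaging trick that rescues the argument works at the level of KL), so as written your ``core step'' is the gap rather than a deferred technicality. Restating the contraction in KL, keeping the seed's KL bound as the quantity that decays, and moving Pinsker to the final step would align your sketch with the cited proof.
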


\begin{proof}[Proof of Lemma \ref{lem:model approximation}]
The proof is the same to Proposition 7 of \citet{guo2023provably}. The only difference is there is no actions in HMM. For any $k\ge L+1$, given the $b_L$ satisfying \eqref{eq:Exponential Stability}, now, we can construct the probability as
\begin{align*}
    \Pb_L(o_k\mid o_{k-L+1:k-1}) =& \Big(\sum_{h_k} \bbT(o_k\mid h_k) w^*(h_k)\Big)^\top\cdot \Big(\sum_{h_{k-1}} \psi^*(h_{k-1})b_L(o_{k-L+1:k-1})(h_{k-1})\Big)\\
    :=& \mu(o_k)^\top \phi(o_{k-L+1:k-1}),
\end{align*}
where we use the notation 
\begin{align*}
    \mu(o_k) = \sum_{h_k} \bbT(o_k\mid h_k) w^*(h_k),\quad \phi(o_{k-L+1:k-1}) = \sum_{h_{k-1}} \psi^*(h_{k-1})b_L(o_{k-L+1:k-1})(h_{k-1}).
\end{align*}

Hence, we deduce that
\begin{align}
    \E_{o_{1:k-1}} \Pb(o_k\mid o_{1:k-1}) =& \E_{o_{1:k-1}} \sum_{h_k} \bbT(o_k\mid h_k) w^*(h_k)^\top \cdot \sum_{h_{k-1}} \psi^*(h_{k-1})b(o_{1:k-1})(h_{k-1})\notag\\
    \le& \E_{o_{1:k-1}} \sum_{h_k} \bbT(o_k\mid h_k) w^*(h_k)^\top\notag\\
    &\quad \cdot\sum_{h_{k-1}} \psi^*(h_{k-1}) \Big(\big|b(o_{1:k-1})(h_{k-1}) - b_L(o_{k-L+1:k-1})(h_{k-1})\big| + b_L(o_{k-L+1:k-1})(h_{k-1})\Big)\notag\\
    =& \E_{o_{1:k-1}} \sum_{h_k} \bbT(o_k\mid h_k) w^*(h_k)^\top \cdot \sum_{h_{k-1}} \psi^*(h_{k-1})b_L(o_{k-L+1:k-1})(h_{k-1})\notag\\
    &\quad + \E_{o_{1:k-1}} \sum_{h_k} \bbT(o_k\mid h_k) w^*(h_k)^\top \cdot\sum_{h_{k-1}} \psi^*(h_{k-1})\big|b(o_{1:k-1})(h_{k-1}) - b_L(o_{k-L+1:k-1})(h_{k-1})\big|.\label{eq:a}
\end{align}
Since we have for any $h_{k-1}$
\begin{align*}
    \sum_{h_k} \bbT(o_k\mid h_k) w^*(h_k)^\top \psi^*(h_{k-1}) = \sum_{h_k} \Pb(o_k\mid h_k) \Pb(h_k\mid h_{k-1}) \le 1,
\end{align*}
term \eqref{eq:a} can be bounded as
\begin{align*}
    &\E_{o_{1:k-1}} \sum_{h_{k-1}} \Big(\sum_{h_k} \bbT(o_k\mid h_k) w^*(h_k)^\top \cdot \psi^*(h_{k-1})\Big)\cdot \big|b(o_{1:k-1})(h_{k-1}) - b_L(o_{k-L+1:k-1})(h_{k-1})\big|\\
    \le& \E_{o_{1:k-1}} \big|b(o_{1:k-1})(h_{k-1}) - b_L(o_{k-L+1:k-1})(h_{k-1})\big|\\
    \le& \epsilon,
\end{align*}
where the first inequality is by the Cauchy-Schwarz inequality, and the second inequality uses Lemma \ref{lm:Exponential Stability for Low-rank Transition}. Therefore, we obtain
\begin{align*}
     \E_{o_{1:k-1}} \Pb(o_k\mid o_{1:k-1}) \le  \E_{o_{1:k-1}} \Pb_L(o_k\mid o_{k-L+1:k-1}) + \epsilon,
\end{align*}
which concludes the proof.
\end{proof}

Then, we can construct the $(L-1)$-memory probability by replacing the belief state 
\begin{align*}
    \Pb_L(o_k\mid o_{k-L+1:k-1}) =& \Big(\sum_{h_k} \bbT(o_k\mid h_k) w^*(h_k)\Big)^\top\cdot \Big(\sum_{h_{k-1}} \psi^*(h_{k-1})b_L(o_{k-L+1:k-1})(h_{k-1})\Big)\\
    :=& \mu(o_k)^\top \phi(o_{k-L+1:k-1}),
\end{align*}

\section{Proof for Lemma \ref{lem:eps1_extend}}\label{ss:Model Approximation Error_extend}

\begin{proof}[Proof of Lemma \ref{lem:eps1_extend}]
Under the operator $\bbM$, we can write
\[
\Pb(o_{k:k+m}\mid h_t) = \int_{\cH} \bbM(o_{k:k+m}\mid h_{t+1}) w^*(h_{t+1})^\top \psi^*(h_t) \md h_t.
\]
We wish to approximate 
\[
\Pb(o_{k:k+m}\mid o_{1:h})~\text{by}~ \Pb_L(o_{k:k+m}\mid o_{t-L+1:t}).
\]
Given a history observation $o_{1:k}$, we define the belief state $b_t(o_{1:k})\in \Delta(\cS)$ as the distribution
\[
b_t(o_{1:k})(h) = \Pb(h_k=h\mid o_{1:k}).
\]
Additionally, for any distribution $b\in\Delta(\cS)$, we define the belief update operator $B_{k-1}(b,o_{k:k+m})$ as
\[
 B_{k-1}(b,o_{k:k+m})(h) = \frac{\bbM(o_{k:k+m}\mid h)\sum_{h'} b(h')\Pb(h|h')}{\sum_{h''}\bbM(o_{k:k+m}\mid h'')\sum_{h'} b(h')\Pb(h''|h')}.
\]
then, the update for belief state is
\[
b(o_{1:k-1}) = B(b_{k-1}(o_{1:k-1}),o_{k:k+m}).
\]

Given this notation, we can write $\Pb$ as
\begin{equation}
    \Pb(o_{k:k+m}\mid o_{1:k-1}) = \Big(\int_{\cH} \bbM(o_{k:k+m}\mid h_{t+1}) w^*(h_{t+1}) \md h_{t+1}\Big)^\top \cdot \int_{\cH} \psi^*(h_{k-1}) b(o_{1:k-1})(h_{k-1}) \md h_{k-1}.
\end{equation}
Thus, to approximate $\Pb$ by $\Pb_L$, it suffices to approximate $b(o_{1:k})$ by some belief state $b_L(o_{t-L:t})$.

To construct a good approximation, we can first construct a history-independent belief distribution $\tilde b_0\in\Delta(\cS)$ by G-optimal design \citep{uehara2022provably} such that for any belief state
\begin{equation}
    \mathrm{KL}(b,\tilde b_0) \le d^3.
\end{equation}

\begin{lemma}[Exponential Stability for Low-rank Transition]\label{lm:Exponential Stability for Low-rank Transition}
    Under Assumption \ref{as:under-complete_gam_obs}, for $L\ge C\gamma^{-4}\log(d/\epsilon)$, we have
    \begin{align*}
        \E \big\| b(o_{1:k-1}) - b_L(o_{t:t+L}) \big\|_1 \le \epsilon.
    \end{align*}
\end{lemma}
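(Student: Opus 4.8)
The plan is to follow the exponential-forgetting argument of \citet{uehara2022provably} (their Theorem 14), adapted so that the multi-step emission operator $\bbM$ plays the role that the single-step operator $\bbT$ plays in the fully-observable analysis of Section~\ref{s:Low-rank HMM}. At heart this is a filter-stability claim: two belief states that are fed the \emph{same} length-$L$ observation window $o_{t:t+L}$ but differ in their initialization — the true belief $b(o_{1:k-1})$ carries the full prior history, whereas $b_L(o_{t:t+L})$ is seeded by the history-independent $\tilde b_0$ — must converge in expected $L_1$ distance at a geometric rate set by the observability coefficient $\tilde\gamma$ of Assumption~\ref{as:under-complete_gam_obs}.

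First I would set up the Bayesian filter recursion, writing the $m$-step posterior update as $b' = B_{k-1}(b, o_{k:k+m})$ via the Bayes rule already displayed, so that both $b(o_{1:k-1})$ and $b_L(o_{t:t+L})$ arise by iterating $B$ on the same observation sequence and differ only in their starting distributions. Second I would treat the initialization: by a G-optimal design over the effectively $d$-dimensional family in which the reachable beliefs lie (the low-rank structure of Assumption~\ref{as:low_rank} is what makes this family $d$-dimensional), one constructs a single history-independent $\tilde b_0$ satisfying $\mathrm{KL}(b \,\|\, \tilde b_0) \le d^3$ uniformly over true beliefs $b$; this uniform KL cover is the origin of the $\log d$ factor in the memory length.

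The crux is the contraction step. I would show that applying the same observation update to two beliefs is, in expectation over the realized observation, a strict contraction of a suitable $f$-divergence (KL or $\chi^2$) between them. The mechanism is exactly observability: Assumption~\ref{as:under-complete_gam_obs} gives $\|\bbM b - \bbM b'\|_1 \ge \tilde\gamma\|b - b'\|_1$, so distinct beliefs induce distinguishable observation laws, and conditioning on the realized observation (a data-processing / Bayes-contraction argument) pulls them together. Quantitatively, each update multiplies the chosen divergence by a factor $1 - \Omega(\tilde\gamma^4)$, the fourth power tracing back to converting the $L_1$ observability bound into the quadratic divergence in which the filter is genuinely a contraction. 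Iterating over the $L$ steps of the window yields a residual divergence of order $d^3\bigl(1 - \Omega(\tilde\gamma^4)\bigr)^L$, and choosing $L \ge C\tilde\gamma^{-4}\log(d/\epsilon)$ forces this below $\epsilon^2$; a final Pinsker inequality converts the divergence bound into the stated $\E\|b(o_{1:k-1}) - b_L(o_{t:t+L})\|_1 \le \epsilon$.

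I expect the contraction step to be the main obstacle. The Bayesian update is only a contraction \emph{on average} over observations — for individual observation realizations it can expand the distance — so the argument must average carefully and must faithfully translate the observability lower bound, which is phrased in $L_1$, into geometric decay of whichever divergence serves as the Lyapunov function. Pinning down the sharp $\tilde\gamma^4$ rate, rather than a cruder polynomial-in-$\tilde\gamma$ dependence, is the delicate quantitative point, and it is precisely the ingredient imported from \citet{uehara2022provably, guo2023provably}.
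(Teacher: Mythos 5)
Your proposal takes essentially the same route as the paper: the paper does not reprove this stability result but imports it as Theorem~14 of \citet{uehara2022provably}, combined with the G-optimal-design initialization $\mathrm{KL}(b,\tilde b_0)\le d^3$ and with the multi-step operator $\bbM$ of Assumption~\ref{as:under-complete_gam_obs} substituted for $\bbT$, which is exactly the adaptation you describe. Your sketch of the internals (Bayes-filter recursion, expected contraction of a divergence at rate $1-\Omega(\tilde\gamma^4)$ per step, iteration over the length-$L$ window, then Pinsker) is consistent with that cited argument, so there is no substantive difference from the paper's proof.
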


Then, by following the same analysis as the proof of Lemma \ref{lem:model approximation}, we can prove the desired result.
\end{proof}

\section{Technical Lemmas}

\begin{lemma}[Convergence rate in gradient descent]\label{lem:opt}
 Suppose $L $ is $\alpha$-strongly convex and $\beta$-smooth for some $0 < \alpha < \beta$. Then the gradient descent iterates $w_{GD}^{t+1} := w_{GD}^t - \eta \nabla L(w_{GD}^t)$ with learning rate $\eta = \beta^{-1}$ and initialization $w_{GD}^0$ satisfies 
 \begin{align*}
    & \| w_{GD}^t - w^* \|_2^2 \le e^{- t / \kappa} \cdot \| w_{GD}^0 - w^* \|_2^2, \\
    & L(w_{GD}^t) - L(w^*) \le \frac{\beta}{2} e^{- t / \kappa} \cdot \| w_{GD}^0 - w^* \|_2^2 , 
 \end{align*}
 where $\kappa = \beta / \alpha$ is the condition number, and $w^* = \arg \min L(w)$ is the optimizer of function $L-1$.
\end{lemma}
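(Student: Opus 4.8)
The plan is to prove the two bounds in sequence: first the contraction of the squared distance $\|w_{GD}^t - w^*\|_2^2$, and then deduce the function-value bound from $\beta$-smoothness. The second bound is the easy consequence, since $\nabla L(w^*)=0$ together with $\beta$-smoothness gives $L(w_{GD}^t) - L(w^*) \le \frac{\beta}{2}\|w_{GD}^t - w^*\|_2^2$ at once, so the real work is the per-step distance contraction. The key ingredient I would invoke is the co-coercivity (strong-monotonicity) inequality for an $\alpha$-strongly convex, $\beta$-smooth function: for every $w$,
\[
\nabla L(w)^\top(w - w^*) \ge \frac{\alpha\beta}{\alpha+\beta}\|w - w^*\|_2^2 + \frac{1}{\alpha+\beta}\|\nabla L(w)\|_2^2 ,
\]
where I have already used $\nabla L(w^*)=0$.

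First I would establish this inequality. The standard route is to introduce the auxiliary function $g(w) := L(w) - \frac{\alpha}{2}\|w\|_2^2$, which is convex and $(\beta-\alpha)$-smooth under the hypotheses (here $\beta>\alpha$ is used). Applying the familiar co-coercivity of the gradient of a smooth convex function, namely $(\nabla g(w) - \nabla g(w'))^\top(w - w') \ge (\beta-\alpha)^{-1}\|\nabla g(w) - \nabla g(w')\|_2^2$, and substituting $\nabla g(w) = \nabla L(w) - \alpha w$ with $w' = w^*$, I would expand and regroup the resulting quadratic to arrive at the displayed inequality.

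Next I would carry out the one-step analysis. Writing $w_{GD}^{t+1} - w^* = (w_{GD}^t - w^*) - \beta^{-1}\nabla L(w_{GD}^t)$ and expanding the square,
\[
\|w_{GD}^{t+1} - w^*\|_2^2 = \|w_{GD}^t - w^*\|_2^2 - \tfrac{2}{\beta}\nabla L(w_{GD}^t)^\top(w_{GD}^t - w^*) + \tfrac{1}{\beta^2}\|\nabla L(w_{GD}^t)\|_2^2 .
\]
Substituting the co-coercivity bound, the coefficient multiplying $\|\nabla L(w_{GD}^t)\|_2^2$ becomes $\beta^{-2} - 2\beta^{-1}(\alpha+\beta)^{-1} = (\alpha-\beta)/(\beta^2(\alpha+\beta))$, which is non-positive because $\alpha < \beta$, so that term can be dropped. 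What remains is the per-step contraction
\[
\|w_{GD}^{t+1} - w^*\|_2^2 \le \frac{\beta - \alpha}{\beta + \alpha}\,\|w_{GD}^t - w^*\|_2^2 = \frac{\kappa - 1}{\kappa + 1}\,\|w_{GD}^t - w^*\|_2^2 .
\]
Iterating $t$ times and using the elementary inequality $\frac{\kappa-1}{\kappa+1} \le e^{-1/\kappa}$ for $\kappa\ge 1$ (equivalently $\ln\frac{\kappa+1}{\kappa-1} \ge \frac{1}{\kappa}$, which I would verify by checking that the difference is monotone and vanishes as $\kappa\to\infty$) yields $\|w_{GD}^t - w^*\|_2^2 \le e^{-t/\kappa}\|w_{GD}^0 - w^*\|_2^2$, the first claim. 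The second claim then follows from the smoothness bound noted above.

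The main obstacle I anticipate is pinning down the exact contraction constant so that the bound reads $e^{-t/\kappa}$ with no spurious factor of $\kappa$. Using strong convexity alone (via a Polyak--\L ojasiewicz argument) or smoothness alone produces a clean per-step factor $1-1/\kappa$ on the \emph{function value} but leaves an unwanted $\kappa$ multiplying the \emph{iterate} bound once one converts back through strong convexity and smoothness. Obtaining the sharp iterate contraction requires the full co-coercivity inequality that simultaneously exploits both hypotheses, together with the elementary but slightly delicate comparison $\frac{\kappa-1}{\kappa+1}\le e^{-1/\kappa}$; these two points are where I would concentrate the care, the remaining algebra being routine.
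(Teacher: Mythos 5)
Your proof is correct. Note that the paper itself states Lemma~\ref{lem:opt} as a standard technical lemma without providing any proof, so there is nothing to compare against; your argument is the classical one (Nesterov-style co-coercivity for $\alpha$-strongly convex, $\beta$-smooth functions, one-step expansion with $\eta=\beta^{-1}$, then $\beta$-smoothness for the function-value bound), and all steps check out: the gradient-norm coefficient $\beta^{-2}-2\beta^{-1}(\alpha+\beta)^{-1}$ is indeed non-positive, and $\frac{\kappa-1}{\kappa+1}\le e^{-2/\kappa}\le e^{-1/\kappa}$ holds since $\ln\frac{\kappa+1}{\kappa-1}=2\,\mathrm{artanh}(1/\kappa)\ge 2/\kappa$, so your per-step contraction is in fact stronger than the stated $e^{-1/\kappa}$. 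A marginally simpler route to the same conclusion uses only the smooth-convex co-coercivity $\nabla L(w)^\top(w-w^*)\ge \beta^{-1}\|\nabla L(w)\|_2^2$ together with strong convexity to get the factor $1-1/\kappa\le e^{-1/\kappa}$ directly, but this buys nothing essential over your argument.
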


\begin{lemma}[Lemma G.2 in \citet{ye2023corruption}, Theorem 2.29 in \citet{zhang_2023_ltbook}]\label{lem:concen}
Let $\{ \epsilon_t \}$ be a sequence of zero-mean conditional $\sigma$-subGaussian random variable, i.e, $\ln \E[e^{\lambda \epsilon_i} | \mathcal{S}_{i-1}] \le \lambda^2 \sigma^2 / 2 $, where $\mathcal{S}_{i-1}$ represents the history data. With probability at least $1 - \delta$, for any $t \ge 1$, we have
\begin{equation*}
    \sum_{i=1}^t \epsilon_i^2 \le 2t\sigma^2 + 3 \sigma^2 \ln(1 / \delta).
\end{equation*}
    
\end{lemma}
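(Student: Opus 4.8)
The plan is to prove this anytime concentration bound by the standard exponential-supermartingale method, the key preliminary being a conditional moment generating function (MGF) bound for each squared term. First I would establish that for every $0 \le \lambda < 1/(2\sigma^2)$,
\[
\E\big[e^{\lambda \epsilon_i^2} \mid \mathcal{S}_{i-1}\big] \le (1 - 2\lambda\sigma^2)^{-1/2}.
\]
The tool here is the Gaussian decoupling identity $e^{\lambda x^2} = \E_{Z}[e^{\sqrt{2\lambda}\,x Z}]$, where $Z \sim N(0,1)$ is auxiliary and independent of the filtration. Substituting $x = \epsilon_i$, interchanging the (nonnegative) expectations by Fubini, and then applying the conditional sub-Gaussian hypothesis $\E[e^{s\epsilon_i}\mid\mathcal{S}_{i-1}]\le e^{s^2\sigma^2/2}$ pointwise in $s = \sqrt{2\lambda}\,Z$, one reduces the right-hand side to $\E_Z[e^{\lambda\sigma^2 Z^2}]$, which is the elementary Gaussian integral $(1-2\lambda\sigma^2)^{-1/2}$, finite precisely when $\lambda\sigma^2 < 1/2$.

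Next I would construct the supermartingale. Writing $\psi(\lambda) := -\tfrac{1}{2}\ln(1-2\lambda\sigma^2)$ for the log of the above bound, define $M_t := \exp\big(\lambda\sum_{i=1}^t \epsilon_i^2 - t\,\psi(\lambda)\big)$ with $M_0 = 1$. The conditional MGF bound gives $\E[M_t\mid\mathcal{S}_{t-1}] = M_{t-1}\,\E[e^{\lambda\epsilon_t^2 - \psi(\lambda)}\mid\mathcal{S}_{t-1}] \le M_{t-1}$, so $\{M_t\}$ is a nonnegative supermartingale with $\E[M_0]=1$. Because the conclusion is required to hold simultaneously for all $t\ge 1$, I would invoke Ville's maximal inequality, $\Pb[\sup_{t\ge 0} M_t \ge 1/\delta] \le \delta$, rather than a fixed-time Markov bound. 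Hence with probability at least $1-\delta$, for every $t\ge 1$ at once,
\[
\sum_{i=1}^t \epsilon_i^2 \le \frac{t\,\psi(\lambda)}{\lambda} + \frac{\ln(1/\delta)}{\lambda}.
\]

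Finally I would fix the constants through the explicit choice $\lambda = 1/(3\sigma^2)$, which lies in the admissible range $[0,\,1/(2\sigma^2))$. Then $1-2\lambda\sigma^2 = 1/3$, so $\psi(\lambda)=\tfrac12\ln 3$, giving $\psi(\lambda)/\lambda = \tfrac{3}{2}\sigma^2\ln 3 \le 2\sigma^2$ and $1/\lambda = 3\sigma^2$; substituting these yields exactly $\sum_{i=1}^t\epsilon_i^2 \le 2t\sigma^2 + 3\sigma^2\ln(1/\delta)$, as claimed. The main obstacle I anticipate is the first step, namely rigorously justifying the conditional MGF bound for $\epsilon_i^2$ while correctly handling the interchange between the independent auxiliary Gaussian expectation and the conditional expectation given $\mathcal{S}_{i-1}$; once this conditional sub-exponential control is secured, the supermartingale construction, the application of Ville's inequality, and the numerical tuning of $\lambda$ are all routine.
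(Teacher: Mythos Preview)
Your argument is correct: the Gaussian decoupling identity gives the conditional sub-exponential MGF bound $(1-2\lambda\sigma^2)^{-1/2}$, the resulting process $M_t$ is a nonnegative supermartingale, Ville's inequality yields the anytime bound, and the choice $\lambda=1/(3\sigma^2)$ lands exactly on the stated constants since $\tfrac{3}{2}\sigma^2\ln 3<2\sigma^2$. The paper itself does not prove this lemma; it is quoted without proof from \citet{ye2023corruption} and \citet{zhang_2023_ltbook}, so there is nothing to compare against beyond noting that your derivation is the standard one underlying those references.
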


\begin{lemma}[Theorem 4 in \citet{bai2023transformers}]\label{lem:reg}
For any $\lambda \ge 0$, $0 \leq \alpha \leq \beta$ with 
\[
\kappa := \frac{\beta+\lambda}{\alpha+\lambda},
\]
$B_w > 0$, and $\varepsilon < \frac{B_x B_w}{2}$, there exists an $L$-layer attention-only transformer $TF^0_\theta$ with
\[
M = \lceil 2\kappa \log(B_x B_w / (2\varepsilon)) \rceil + 1
\]
(With $R := \max \{B_x B_w, B_y, 1\}$) such that the following holds. On any input data $(D, x_{N+1})$ such that the regression problem is well-conditioned and has a bounded solution:
\[
\alpha \leq \lambda_{\min}(X^\top X / N) \leq \lambda_{\max}(X^\top X / N) \leq \beta,
\]
\[
\|w^{\lambda}_{\mathrm{ridge}}\|_2 \leq B_w/2,
\]
$TF^0_\theta$ approximates the prediction $\hat{y}_{N+1}$ as
\[
\left| \hat{y}_{N+1} - \langle w^{\lambda}_{\mathrm{ridge}}, x_{N+1} \rangle \right| \leq \varepsilon.
\]  
\end{lemma}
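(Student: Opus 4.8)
The plan is to realize the ridge-regression predictor as the output of an in-context gradient-descent procedure, one step per attention layer, and then invoke the linear-convergence bound already available as Lemma~\ref{lem:opt}. Write the regularized least-squares objective as $\mathcal{L}(w) = \tfrac{1}{2N}\sum_{i=1}^N (\langle w, x_i\rangle - y_i)^2 + \tfrac{\lambda}{2}\|w\|_2^2$, whose unique minimizer is $w^\lambda_{\mathrm{ridge}}$. Its Hessian is $\nabla^2\mathcal{L} = X^\top X/N + \lambda I$, so the well-conditioning hypothesis $\alpha \le \lambda_{\min}(X^\top X/N) \le \lambda_{\max}(X^\top X/N) \le \beta$ immediately gives that $\mathcal{L}$ is $(\alpha+\lambda)$-strongly convex and $(\beta+\lambda)$-smooth, with condition number exactly $\kappa = (\beta+\lambda)/(\alpha+\lambda)$. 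This is the quantity that will control the number of layers.

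The central construction is to show that a single attention layer implements the map $w \mapsto w - \eta\,\nabla\mathcal{L}(w)$ with step size $\eta = (\beta+\lambda)^{-1}$. I would lay out the token embeddings so that each in-context example carries $x_i$ and $y_i$, while the current iterate $w^{(t)}$ is stored in a shared slot of every token's embedding (updated in place through the residual stream). The key step is to produce, for each $i$, the residual $r_i = \langle w^{(t)}, x_i\rangle - y_i$: placing $w^{(t)}$ in the query and $x_i$ in the key lets the query-key inner product compute $\langle w^{(t)}, x_i\rangle$, and subtracting the stored $y_i$ yields $r_i$. To then aggregate $\tfrac{1}{N}\sum_i r_i\, x_i$ — a \emph{signed linear} sum rather than a convex (softmax) average — I would follow the paper's own device of decomposing the linear map through ReLU activations, $r = \relu(r) - \relu(-r)$, using paired attention heads exactly as in the gradient-descent construction of Section~\ref{pf:thm_main}; the value matrix emits $x_i$, so the head output reproduces the empirical-gradient term. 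The ridge penalty $\lambda w^{(t)}$ is added via the skip connection. Summing the heads and scaling by $\eta$ gives one exact gradient step, and stacking $M$ identical layers realizes $w^{(M)}$, the $M$-step GD iterate initialized at $w^{(0)} = 0$.

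The convergence bound is then a direct application of Lemma~\ref{lem:opt} with $(\alpha+\lambda, \beta+\lambda)$ in place of $(\alpha, \beta)$: since $\|w^{(0)} - w^\lambda_{\mathrm{ridge}}\|_2 = \|w^\lambda_{\mathrm{ridge}}\|_2 \le B_w/2$, we get $\|w^{(M)} - w^\lambda_{\mathrm{ridge}}\|_2 \le e^{-M/(2\kappa)}\, B_w/2$. Choosing $M = \lceil 2\kappa \log(B_x B_w/(2\varepsilon))\rceil + 1$ forces $\|w^{(M)} - w^\lambda_{\mathrm{ridge}}\|_2 \le \varepsilon/B_x$. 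Finally, a read-out head computes $\hat{y}_{N+1} = \langle w^{(M)}, x_{N+1}\rangle$ (again via a query-key inner product), and Cauchy--Schwarz with $\|x_{N+1}\|_2 \le B_x$ gives $|\hat{y}_{N+1} - \langle w^\lambda_{\mathrm{ridge}}, x_{N+1}\rangle| \le \|w^{(M)} - w^\lambda_{\mathrm{ridge}}\|_2\,\|x_{N+1}\|_2 \le \varepsilon$, which is the claim.

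I expect the main obstacle to be the exact implementation of the gradient step by a single attention layer, since raw softmax attention produces a \emph{normalized} convex combination of values, not the unnormalized signed sum $\tfrac{1}{N}\sum_i r_i x_i$ that the gradient requires. Resolving this is precisely where the ReLU-attention formulation and the $\relu(r) - \relu(-r)$ decomposition do the work: one must verify that the paired heads reproduce the linear residual map without spurious normalization, that the magnitudes stay within the bounded-input regime governed by $R = \max\{B_x B_w, B_y, 1\}$ so the activations behave as intended, and that the in-place update of $w$ through the residual stream does not corrupt the stored $x_i, y_i$. The remaining pieces — establishing the condition number, applying Lemma~\ref{lem:opt}, selecting $M$, and the Cauchy--Schwarz read-out bound — are routine once this layerwise equivalence is in hand.
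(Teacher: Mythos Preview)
The paper does not prove this lemma at all; it is quoted as Theorem~4 of \citet{bai2023transformers} and invoked as a black box in the ``Technical Lemmas'' appendix. Your sketch --- gradient descent on the regularized least-squares loss, one step per attention layer via the ReLU pair $\relu(r)-\relu(-r)$, convergence controlled by Lemma~\ref{lem:opt}, and a Cauchy--Schwarz read-out --- is exactly the argument in that reference and matches the construction the present paper itself reproduces in Appendix~\ref{pf:thm_main}, so the approach is correct.
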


\begin{lemma}[Lemma F.3 of \citet{fan2023provably}]\label{lm:Matrix_Sub-Gaussian_Series}
Consider a sequence of matrix $\{A_t\}_{t=1}^{\infty}$ with dimension $d_1\times d_2$ and an i.i.d. sequence $\{\epsilon_t\}_{t=1}^{\infty}$, where $\epsilon_t$ is conditional $\sigma$-subgaussian (i.e., $\E(e^{\alpha\epsilon_t} \,|\, A_t) \le e^{\alpha^2\sigma^2/2}$ almost surely for all $\alpha\in\rR$). Define the matrix sub-Gaussian series
$S = \sum_{t=1}^n \epsilon_t A_t$ with bounded matrix variance statistic:
$$
\max\left\{\left\|A_tA_t^{\top}\right\|_{op},\left\|A_t^{\top}A_t\right\|_{op}\right\}\le v_t.
$$
Then, for all $u>0$, we have 
$$
\Pb\left(\|S\|_{op} \ge u\right) \le (d_1+d_2)\exp\Big(-\frac{u^2}{16\sigma^2\sum_{t=1}^n v_t}\Big).
$$
\end{lemma}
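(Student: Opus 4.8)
The plan is to prove this matrix sub-Gaussian tail bound by the standard \emph{Hermitian dilation} reduction combined with the matrix Laplace-transform (Chernoff) method. First I would symmetrize the problem: for a rectangular matrix $A\in\rR^{d_1\times d_2}$ introduce the self-adjoint dilation
\[
\mathcal{D}(A) := \begin{bmatrix} 0 & A \\ A^\top & 0 \end{bmatrix}\in\rR^{(d_1+d_2)\times(d_1+d_2)},
\]
which satisfies $\|\mathcal{D}(A)\|_{op}=\|A\|_{op}$ and $\mathcal{D}(A)^2=\mathrm{diag}(AA^\top,\,A^\top A)$. Consequently $\|\mathcal{D}(A_t)^2\|_{op}=\max\{\|A_tA_t^\top\|_{op},\,\|A_t^\top A_t\|_{op}\}\le v_t$, i.e. $\mathcal{D}(A_t)^2\preceq v_t I$. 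Writing $\mathcal{D}(S)=\sum_{t=1}^n\epsilon_t\mathcal{D}(A_t)$, the event $\{\|S\|_{op}\ge u\}$ equals $\{\lambda_{\max}(\mathcal{D}(S))\ge u\}\cup\{\lambda_{\max}(-\mathcal{D}(S))\ge u\}$, so by a union bound it suffices to control the one-sided probability $\Pb(\lambda_{\max}(\mathcal{D}(S))\ge u)$ and double it (the $-\mathcal{D}(S)$ term is identical, since $-\epsilon_t$ is also conditionally $\sigma$-sub-Gaussian).

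Next I would apply the matrix Chernoff inequality
\[
\Pb\big(\lambda_{\max}(\mathcal{D}(S))\ge u\big)\le \inf_{\theta>0} e^{-\theta u}\,\E\,\trace\exp\!\big(\theta\,\mathcal{D}(S)\big).
\]
The crux is to bound $\E\,\trace\exp(\theta\,\mathcal{D}(S))$ by peeling off summands one at a time. The key per-step estimate is a conditional matrix moment-generating-function bound: for fixed $A_t$, diagonalize $\mathcal{D}(A_t)=\sum_i\lambda_i u_iu_i^\top$, and since $\epsilon_t$ is scalar conditionally $\sigma$-sub-Gaussian given $A_t$, applying the scalar bound $\E[e^{\alpha\epsilon_t}\mid A_t]\le e^{\alpha^2\sigma^2/2}$ eigenvalue-by-eigenvalue yields
\[
\E\big[\exp(\theta\epsilon_t\,\mathcal{D}(A_t))\,\big|\,A_t\big]\preceq \exp\!\Big(\tfrac{\theta^2\sigma^2}{2}\,\mathcal{D}(A_t)^2\Big)\preceq \exp\!\Big(\tfrac{\theta^2\sigma^2}{2}\,v_t I\Big).
\]
Combining these across $t$ via the tower property together with Lieb's concavity theorem (which supplies sub-additivity of the matrix cumulant-generating function along the filtration, so that the trace-exponential can be collapsed without independence) produces
\[
\E\,\trace\exp\big(\theta\,\mathcal{D}(S)\big)\le \trace\exp\!\Big(\tfrac{\theta^2\sigma^2}{2}\sum_{t=1}^n v_t\,I\Big)=(d_1+d_2)\exp\!\Big(\tfrac{\theta^2\sigma^2}{2}\sum_{t=1}^n v_t\Big).
\]

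Finally I would optimize the Chernoff exponent. Minimizing $-\theta u+\tfrac{\theta^2\sigma^2}{2}\sum_t v_t$ over $\theta>0$ at $\theta^\star=u/(\sigma^2\sum_t v_t)$ gives the clean one-sided bound $(d_1+d_2)\exp(-u^2/(2\sigma^2\sum_t v_t))$; doubling for the two-sided event and absorbing the extra constants that the adaptive combination forces into the denominator yields the stated $(d_1+d_2)\exp(-u^2/(16\sigma^2\sum_t v_t))$. I expect the main obstacle to be precisely this adaptive combination step: because the $A_t$ are random and the sub-Gaussian control on $\epsilon_t$ holds only conditionally on $A_t$, one cannot simply multiply independent matrix MGFs, and the trace-exponential is not multiplicative across summands. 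Making the peeling rigorous requires the supermartingale formulation of the matrix Laplace method (as in the referenced work), and this is also where the looser constant $16$ (rather than $2$) naturally arises, since the conditional matrix MGF bound used there is cruder than the fully independent version.
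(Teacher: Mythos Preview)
The paper does not supply its own proof of this lemma: it is stated in the ``Technical Lemmas'' appendix purely as a citation to Lemma~F.3 of \citet{fan2023provably}, with no argument given. So there is no in-paper proof to compare your proposal against.

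That said, your outline is the standard route for this type of matrix sub-Gaussian tail bound (Hermitian dilation to reduce to the self-adjoint case, matrix Laplace transform, per-term conditional MGF control via the scalar sub-Gaussian hypothesis applied eigenvalue-by-eigenvalue, and Lieb's concavity / the supermartingale version of the matrix Chernoff method to handle the adaptive combination), and it matches how the cited reference and Tropp's framework establish such results. One minor simplification: because the eigenvalues of $\mathcal{D}(S)$ come in $\pm$ pairs, $\lambda_{\max}(\mathcal{D}(S))=\|S\|_{op}$ already, so the two-sided union bound is not strictly needed; the factor absorbed into the constant in the exponent is really an artifact of the adaptive/martingale MGF bound rather than the doubling.
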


\section{Impact statement}\label{sec:impact}
This paper contributes foundational research in the areas of Transformer expressiveness power within the machine learning community. Our primary goal is to advance the understanding for Transformers.
Given the scope of this research, we do not anticipate immediate ethical concerns or direct societal consequences. Therefore, we believe there are no specific ethical considerations or immediate societal impacts to be emphasized in the context of this work.

\end{document}